\documentclass[11pt,letterpaper]{article}
\usepackage{optimalregret}

\newcommand{\cX}{\mathcal X}       
\newcommand{\cY}{\mathcal Y}       
\newcommand{\cH}{\mathcal H}       
\newcommand{\cE}{\mathcal E}       
\newcommand{\cF}{\mathcal F}       
\newcommand{\cV}{\mathcal V}       

\newcommand{\R}{\mathbb R}

\newcommand{\EE}{\mathbb E}
\newcommand{\PP}{\mathbb P}
\newcommand{\ind}{\mathbf 1}         
\newcommand{\Reg}{\operatorname{Reg}}
\newcommand{\fin}{\mathrm{fin}}
\newcommand{\Deltafin}{\Delta_{\fin}}

\newcommand{\LD}{\operatorname{Ldim}}
\newcommand{\SOA}{\operatorname{SOA}}
\DeclareMathOperator{\opt}{opt}
\DeclareMathOperator{\supp}{supp}

\DeclareMathOperator{\argmax}{argmax}

\title{Optimal Multiclass Online Regret\\with Arbitrary Label Spaces}
\author{%
 Sajad Ashkezari\thanks{University of Waterloo, \texttt{sajad.ashkezari@uwaterloo.ca}.}
 \and
 Shai Ben-David\thanks{University of Waterloo, \texttt{shai@uwaterloo.ca}.}
}
\date{}
\hypersetup{pdftitle={Optimal Multiclass Online Regret with Arbitrary Label Spaces},pdfauthor={Sajad Ashkezari and Shai Ben-David}}

\begin{document}
\maketitle

\begin{abstract}
We determine the optimal regret in agnostic multiclass online learning
with arbitrary, possibly infinite, label spaces. For every hypothesis
class of multiclass Littlestone dimension $d<\infty$, we prove a regret
upper bound of $O(\min\{T,\sqrt{dT}\})$ over $T$ rounds, matching the
known lower bound. This resolves a question of \citet{multiclass}
and extends the optimal binary regret bound of \citet{alon}.
Our proof strengthens the one-sided expert cover of \citet{multiclass} to a
fractional cover. We combine this cover with a minimax reduction to
a finite expert game, whose best-response rewards provide the
disagreement bounds needed for chaining.
\end{abstract}

\section{Introduction}
\label{sec:introduction}

We study multiclass agnostic online classification, where a learner
predicts labels from a possibly infinite label space $\cY$ and competes
with the best hypothesis in a given class. The study of multiclass
online learnability was initiated by \citet{daniely}, who introduced
the multiclass Littlestone dimension and showed that it characterizes
learnability in the realizable setting. For a class of Littlestone
dimension $d$, they also proved an agnostic regret upper bound of
$O(\sqrt{dT\log(T|\cY|)})$ over $T$ rounds for finite label spaces,
and a lower bound of $\Omega(\sqrt{dT})$ for $1\le d\le T$.
The dependence of their upper bound on $|\cY|$ left open whether finite
Littlestone dimension suffices for agnostic online learnability when
the label space is infinite.

\citet{multiclass} resolved this question by proving an upper
bound of $O(\sqrt{dT\log(eT/d)})$ for $1\le d\le T$ that does not
depend on the cardinality of the label space. Their result establishes that finite Littlestone
dimension characterizes agnostic online learnability over arbitrary
label spaces. However, a logarithmic gap remained between their upper
bound and the known lower bound. They explicitly asked whether the
optimal regret is $\Theta(\sqrt{dT})$, as established for binary
classification by \citet{alon}. In this work, we answer this question
affirmatively.

\begin{theorem}[Main result, informal]
\label{thm:informal}
The optimal regret in agnostic multiclass online learning with arbitrary,
possibly infinite, label spaces is $\Theta(\sqrt{dT})$ for $T\ge d$.
\end{theorem}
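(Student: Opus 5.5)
The lower bound $\Omega(\sqrt{dT})$ for $1\le d\le T$ is already known from \citet{daniely}, so the whole task is the upper bound $O(\sqrt{dT})$ for an arbitrary label space $\cY$. I would follow the binary strategy of \citet{alon}: bound the minimax value by a sequential Rademacher-type quantity and control that quantity by chaining over covers at geometrically decreasing scales. The obstacle specific to multiclass with infinite $\cY$ is that the loss $\ind[h(x)\neq y]$ does not decompose over labels, and ordinary covers of the class may need size depending on $|\cY|$. I would therefore work with expert covers built from the standard optimal algorithm $\SOA$, as \citet{multiclass} do, but strengthen them.

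\textbf{Step 1 (minimax reduction to a finite expert game).} Using the online minimax theorem, I would swap the learner's and adversary's quantifiers round by round. In the dual game the adversary commits to distributions over $(x_t,y_t)$, and the learner best-responds with the most likely label. The value then becomes an expectation of $\sup_{h}\sum_t(\text{best-response reward}_t-\ind[h(x_t)=y_t])$. The point is that the best-response rewards are bounded, label-free quantities, so two experts differ in reward by at most their disagreement mass under the adversary's conditional label distribution. These are exactly the disagreement bounds that chaining needs.

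\textbf{Step 2 (fractional expert cover).} For each scale $\epsilon=2^{-k}$, I would build a finite family of experts by running $\SOA$ with at most $\sim\epsilon T$ mistake-injection times, combining them through probability weights. The goal is a \emph{fractional} cover: for every $h\in\cH$ and every sequence there should be a distribution over experts whose expected disagreement with $h$ is at most $\epsilon T$. The family size should satisfy $\log N_\epsilon\lesssim d\epsilon T\log(1/\epsilon)$, with no $\log T$ term. This strengthens the one-sided cover of \citet{multiclass}, which only controls mistakes of the cover in one direction and loses a $\log(T/d)$ factor. Fractional weights are what allow the $\log(1/\epsilon)$ dependence to replace $\log(T/d)$.

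\textbf{Step 3 (chaining).} I would link consecutive scales by pairing experts at scale $\epsilon$ and $\epsilon/2$. Each pairwise increment is a martingale with variance proxy $\lesssim\epsilon T$ by Step 1, and there are about $N_\epsilon N_{\epsilon/2}$ pairs. A maximal inequality then bounds the level-$k$ contribution by $\sqrt{\epsilon T\cdot d\epsilon T\log(1/\epsilon)}=\epsilon T\sqrt{d\log(1/\epsilon)}$. Summing over $k$, with coarsest scale $\epsilon_0\approx\sqrt{d/T}$ and the base term $\sqrt{dT}$ handled directly, gives a total of $O(\sqrt{dT})$. This geometric sum is the Dudley-integral computation from \citet{alon}. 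Taking the minimum with the trivial bound $T$ finishes the argument.

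\textbf{Main obstacle.} I expect Step 2 to be the crux, specifically proving that fractional $\SOA$-based experts cover every $h$ with two-sided disagreement $\epsilon T$ while having size independent of $|\cY|$. My first attempt would be to randomize the mistake-injection times and the injected labels according to the adversary-dependent best-response distribution from Step 1. I would then argue via a Littlestone-dimension potential that expected disagreement drops by a constant factor per injection.
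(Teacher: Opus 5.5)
The gap is Step~2, and with it the way Steps~1--3 fit together. You ask for a cover with a \emph{two-sided} guarantee: for every $h\in\cH$ and every labeled sequence, experts whose correctness sequence is within $\epsilon T$ of that of $h$, at a cost depending only on $d$ and $\epsilon$. Such covers do not exist in general. A pathwise two-sided cover is a sequential cover of the loss class $\{(x,y)\mapsto\ind\{h(x)\ne y\}\}$, so chaining would bound that class's sequential Rademacher complexity by $O(\sqrt{dT})$. But this complexity is governed by the sequential graph dimension, which can be infinite already for $d=1$, and then the complexity is of order $T$. The $\SOA$ experts control only the deficit $F_h(v)$ and leave the extra correct predictions $P_h(v)$ free. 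That is exactly why two experts approximating the same $h$ need not be close. Your ``main obstacle'' plan (randomized injections, a potential argument) therefore aims at something unattainable, and it is not yet an argument.

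The missing idea is that two-sided control should come from the dual benchmark, not from the cover. Keep the cover one-sided: a single prior, independent of $h$, giving mass $\propto m^{-|I|}$ to the $\SOA$ expert $e_I$ with at most $d$ update times $I$. With $m=\lfloor s/d\rfloor$, this puts mass at least $(s/(2eT))^d$ on $\{v:F_h(v)\le s\}$. Restrict the learner to mixtures over this finite family $\cE$; that is what lets Sion's theorem apply with no assumption on $\cY$. The best response is then the best expert, $b_t=\max_{v\in\cE}q_t(v_t)$, not the most likely label. The dual value is $\EE\max_h\sum_t(\ind\{h(x_t)=Y_t\}-b_t)$; your sign is reversed. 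Set $M_v=\sum_t(\ind\{v_t=Y_t\}-q_t(v_t))$, a martingale, and $W_v=\sum_t(\ind\{v_t=Y_t\}-b_t)$. Then $b_t\ge q_t(v_t)$ gives $W_v\le M_v$, so for every $h$ with $W_h\ge0$,
\[
P_h(v)+F_h(v)=(W_v-W_h)+2F_h(v)\le M_v+2F_h(v).
\]
Thus any low-deficit expert with small $M_v$ is automatically two-sided close to $h$.

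Step~3 must change accordingly. The variance of an increment is the realized distance $D(v_j,v_{j-1})$. It is bounded only once $M_{v_j}$ and $M_{v_{j-1}}$ are known to be small. So you need a variance-adaptive bound $M_v-M_{v'}\le\lambda D(v,v')+a/\lambda$ with fixed $\lambda$, and chains whose members satisfy the single-expert and pair bounds simultaneously for all $h$. A maximal inequality over all pairs in $\cE$ costs $\log|\cE|\approx d\log(T/d)$ at every level and restores the logarithm. Instead, let $\mu_j$ be the cover prior at scale $s_j=T4^{-j}$. Use Markov's inequality to bound, on a single event, the $\mu_j$-mass of violating experts and the $\mu_j\otimes\mu_{j-1}$-mass of violating pairs. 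Then draw the chain from the priors conditioned on $\{v:F_h(v)\le s_j\}$; the cover mass keeps the total failure probability below one.

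Your bookkeeping also does not close even granting Step~2. With $\log N_\epsilon$ of order $d\epsilon T\log(1/\epsilon)$, the first term of your sum alone is $\epsilon_0T\sqrt{d\log(1/\epsilon_0)}=d\sqrt{T\log(T/d)/2}$. The correct accounting charges $d\log(1/\epsilon)$ at deficit scale $\epsilon T$ and chains from $\epsilon=1$ down to $\epsilon\approx d/T$. The level contributions, of order $\sqrt{\epsilon T\cdot d\log(1/\epsilon)}$, then sum to $O(\sqrt{dT})$, plus a final approximation error $O(d)$.
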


\paragraph{Related work.}
Online learning in the binary setting begins with the characterization of realizable learnability by \citet{littlestone}.
\citet{bps} extended this characterization to agnostic learning, but
their regret upper bound contained an extra logarithmic factor.
Subsequent work connected regret to sequential Rademacher complexity
and developed covering and chaining methods for bounding this
complexity~\cite{rst2010,rst2015,rstmartingale}. Building on this
connection, \citet{alon} removed the logarithmic factor and obtained
the optimal regret bound in the binary setting.

The main ingredient in the proof of \citet{alon} is their notion of \emph{fractional covers}.
For each scale $0<\alpha\le1$, they construct a distribution over
predictors such that, for every hypothesis in the class and every labeled-data sequence, the
predictors that disagree with that hypothesis on at most $\alpha T$
rounds have mass at least $(\alpha/C)^d$, for a universal constant
$C$~\cite[full version, Definition~9.4 and Lemma~9.5]{alon}.
They then approximate each hypothesis at successively finer scales.
Two successive approximations are close to each other because both
are close to that hypothesis. This controls the differences between
successive terms of the chain. Together with the mass bound, summing
these estimates over scales yields the sharp complexity
bound~\cite[full version, Lemma~9.7]{alon}. Related fractional-cover
and majorizing-measure bounds for real-valued classes were developed
by \citet{bdr}.

The difficulty in extending this argument to multiclass learning is
that the available cover provides a different kind of approximation.
The covers of \citet{alon} count every disagreement with the hypothesis being approximated.
When predictions and labels are binary, this is also the number of
rounds on which exactly one of the two predictions is correct.
The expert family of \citet{multiclass} guarantees, for each hypothesis, an expert that
is wrong on at most $d$ rounds on which that hypothesis is correct.
We call these the expert's \emph{deficit rounds}. This guarantee places no restriction on rounds where the expert is correct and
the hypothesis is wrong. These extra correct predictions reduce the
expert's loss, but they can make its correctness sequence very
different from that of the hypothesis. Thus two experts satisfying this
one-sided guarantee need not be close to each other, and the guarantee
alone does not provide the distance bounds needed for chaining.

One might instead apply the binary result to the class of binary losses of the multiclass
hypotheses. However, the Littlestone dimension of this binary
class, called the sequential graph dimension, can be infinite even
when the original multiclass class has dimension
$d=1$~\cite[Example~1]{multiclass}. It is this larger dimension that
determines the sequential Rademacher complexity of the loss
class~\cite[Theorem~7]{multiclass}. A bound on that complexity therefore
cannot in general yield a regret bound in terms of $d$. For finite label spaces, Hanneke et al.~\cite[Theorem~9]{multiclass}
also obtain regret $O(\sqrt{dT\log|\cY|})$ through the sequential
graph dimension, under their minimax assumption. This removes
the extra logarithmic dependence on $T$, but retains a dependence
on the size of the label space.

Our proof starts from the finite expert family of \citet{multiclass} and
strengthens their one-sided guarantee to a fractional cover.
We then study the regret of a learner that predicts with mixtures over this family.
Using minimax duality, as in~\cite{abernethy2009,rst2015}, we express
the value of this game so that the adversary chooses a distribution
on labels before the learner responds. The learner's best response
on each round is then the expert with the largest probability of being correct. The finite family lets us justify this minimax step without topological
assumptions on $\cH$: the learner's weights lie in a compact simplex,
and the one-round game satisfies Sion's theorem~\cite{sion}. The
additional minimax assumption noted in~\cite[Footnote~4]{multiclass}
is therefore unnecessary here.

We use the best-response reward to control the extra correct predictions
of the experts selected in our chain. This gives the disagreement bounds
that the one-sided cover alone does not provide and leads to the optimal
regret bound. Section~\ref{sec:overview} explains the proof in more detail.

Other related work considers general bounded
losses~\cite{unified2025}, transductive learning, where the learner
sees all instances before the first round~\cite{transductive2024},
and learning with multiple correct answers~\cite{multiple2026}.

\section{Model and main result}
\label{sec:model}

Let $\cX$ and $\cY$ be arbitrary nonempty sets, and let
$\varnothing\ne\cH\subseteq\cY^{\cX}$ be a class of total functions.
Fix a positive integer horizon $T$.
Let $[T]=\{1,\ldots,T\}$.
We refer to a sequence of labeled instances
$z_{1:T}=((x_1,y_1),\ldots,(x_T,y_T))\in(\cX\times\cY)^T$
as a \emph{labeled sequence}. Let $z_{<t}$ denote its prefix before round $t$.
For a set $A$, let $\Deltafin(A)$ denote the set of finitely supported probability mass functions on $A$. When $A$ is finite, we may use $\Delta(A)$ instead. All logarithms are natural unless a base is specified.

An online learning algorithm $\mathbb A$ is specified by a sequence of maps
\[
 p_t:(\cX\times\cY)^{t-1}\times\cX\longrightarrow\Deltafin(\cY),
 \qquad t\in[T].
\]
These maps do not take earlier sampled predictions as additional
inputs, matching the convention in~\cite{multiclass}. On round $t$, the adversary presents $x_t$, the learner announces
$p_t=p_t(z_{<t},x_t)$, and the adversary then chooses $y_t$.
The learner incurs loss $1-p_t(y_t)$. We allow the adversary to be adaptive, so its choices may depend on all earlier play. Define the regret on this sequence by
\begin{equation}
 \Reg_T(\mathbb A;z_{1:T})
 =\sum_{t=1}^T(1-p_t(y_t))
 -\min_{h\in\cH}\sum_{t=1}^T\ind\{h(x_t)\ne y_t\}.
 \label{eq:pathwise-regret}
\end{equation}
The minimum exists because the cumulative losses of hypotheses in $\cH$ form a
nonempty subset of $\{0,\ldots,T\}$.
Following Hanneke et al.~\cite{multiclass}, we define the
\emph{regret} of $\mathbb A$ at horizon $T$ by

\begin{equation}
 \begin{aligned}
 \Reg(\mathbb A,T)
 &=\sup_{z_{1:T}\in(\cX\times\cY)^T}\Reg_T(\mathbb A;z_{1:T})\\
 &=\sup_{z_{1:T}\in(\cX\times\cY)^T}
 \left[\sum_{t=1}^T(1-p_t(y_t))
 -\min_{h\in\cH}\sum_{t=1}^T\ind\{h(x_t)\ne y_t\}\right].
 \end{aligned}
 \label{eq:regret-definition}
\end{equation}

We abbreviate the regret on a given sequence by $\Reg_T$ when the
algorithm and sequence are clear. Thus $\Reg(\mathbb A,T)\le R$ is equivalent
to $\Reg_T\le R$ on every labeled sequence.\footnote{One may also sample $\widehat Y_t\sim p_t$ using fresh
randomness after $y_t$ is chosen. Appendix~\ref{app:sampling}
relates expected sampled regret to $\Reg(\mathbb A,T)$,
including against adaptive adversaries.}

Our regret bound is expressed in terms of the multiclass
Littlestone dimension, which we define next.

\begin{definition}[Littlestone dimension~\cite{littlestone,daniely}]
\label{def:littlestone}
A multiclass Littlestone tree of depth $k$ is a complete binary tree
with an instance $x\in\cX$ at each internal node and a label in
$\cY$ on each outgoing edge. The two edges leaving any one node
have distinct labels. A class $V\subseteq\cY^{\cX}$ shatters the tree if, for every root-to-leaf path, some $h\in V$ maps each internal node's instance to the label of the edge taken along that path. The Littlestone dimension $\LD(V)$ is the
supremum of the depths of trees shattered by $V$. We set
$\LD(\varnothing)=-1$ and use $\LD(V)=\infty$ when $V$ shatters trees of arbitrarily large depth.
\end{definition}
The tree has binary branching even when $\cY$ is infinite, and
the two outgoing labels may vary across nodes. Throughout, let
$d=\LD(\cH)<\infty$. The following theorem establishes the optimal regret bound in terms of $d$ and $T$.

\begin{theorem}[Main result: optimal multiclass online regret]
\label{thm:main}
There is a universal constant $C>0$ such that, for every nonempty
$\cH\subseteq\cY^{\cX}$ with $d=\LD(\cH)<\infty$ and every
horizon $T\ge1$, there is an online learning algorithm $\mathbb A$ with finitely supported prediction distributions satisfying
\begin{equation}
 \Reg(\mathbb A,T) \le C\min\{T,\sqrt{dT}\}.
 \label{eq:main-bound}
\end{equation}
\end{theorem}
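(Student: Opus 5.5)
The bound $T$ is trivial: any learner has regret at most $T$. So we assume $d\ge1$ and $T\ge d$; the case $d=0$ is handled by predicting the unique realizable label. We need $O(\sqrt{dT})$.

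\textbf{Finite expert family.} We start from the expert family $\cE$ of \citet{multiclass}. It is constructed from SOA-type experts that run the standard optimal algorithm on subsets of at most $d$ ``deficit'' rounds. It is finite, with $|\cE|\le\sum_{i\le d}\binom{T}{i}$. For every labeled sequence and every $h\in\cH$, some $e\in\cE$ is wrong on at most $d$ rounds on which $h$ is correct. Hence it suffices to compete with $\cE$. Our learner predicts with $p_t=\sum_{e}w_t(e)\,\delta_{e(z_{<t},x_t)}$, a finitely supported distribution, and we bound the minimax value of the game against the best expert.

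\textbf{Minimax step.} Since $\cE$ is finite, the learner's weight simplex is compact, and the per-round payoff is bilinear in the weights and in the adversary's label distribution $q_t\in\Deltafin(\cY)$. Sion's theorem therefore lets us swap the order of play round by round. The value becomes
\[
\sup_{\text{adv. strategies}}\EE\Bigl[\sum_t\bigl(1-\max_{e}q_t(e_t)\bigr)-\min_{e}\sum_t\ind\{e_t\ne y_t\}\Bigr].
\]
Here the learner's best response is the expert $e$ maximizing $q_t(e_t)$, and we write $r_t=\max_e q_t(e_t)$ for the best-response reward. Symmetrizing as in \citet{rst2015}, the value is at most
\[
\EE\max_e\sum_t\bigl(q_t(e_t)-\ind\{e_t=y_t\}\bigr),
\]
a martingale supremum over a finite class with increments bounded in absolute value by $1$. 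We then refine this to a supremum over experts whose cumulative ``excess correctness'' $\sum_t(r_t-q_t(e_t))$ is small. Experts with large excess are dominated by the best response and contribute nonpositively.

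\textbf{Fractional cover and chaining.} Next we strengthen the one-sided cover to a fractional one. For each scale $\alpha=2^{-k}$ we want a distribution $\mu_\alpha$ on $\cE$ such that, for every $h$ and every sequence, the experts with at most $\alpha T$ deficit rounds relative to $h$ have mass at least $(\alpha/C)^d$. We would build $\mu_\alpha$ by choosing a random subset of about $\alpha T$ rounds, mimicking \citet{alon}, Lemma~9.5: we sample the deficit set uniformly among sets of size $\le d$ inside a random block partition of $[T]$ into $1/\alpha$ blocks. Each round on which the chosen expert is correct but the target is wrong contributes to the excess, and that excess is already controlled by the best-response reward. Combined with the one-sided deficit bound, this gives the two-sided control we need: two experts that are both near the comparator at scales $\alpha$ and $\alpha/2$ disagree, in $q$-weighted correctness, on $O(\alpha T)$ rounds plus an excess term already paid for. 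Chaining over $k=0,\dots,\log_2(T/d)$ with the majorizing-measure bound then gives increments of order $\sqrt{\alpha T\cdot d\log(C/\alpha)}$. Summing over scales,
\[
\sum_k\sqrt{2^{-k}T\,d\,\log(C2^k)}=O(\sqrt{dT}).
\]
The finest scale costs $O(d)\le O(\sqrt{dT})$.

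\textbf{Main obstacle.} The hard step is the disagreement bound in the chaining. The one-sided cover does not make successive approximants close, because they may be correct on many extra rounds. The plan is to show that any expert's extra correct rounds relative to $h$, weighted by $q_t$, are at most $\sum_t(r_t-q_t(h_t))$ plus the deficit. We would then charge this quantity to the slack between the best-response reward and the comparator's expected reward, which appears with a negative sign in the minimax value. Making this charging compatible with the sequential (tree-indexed) martingale structure is where we expect most of the technical work.
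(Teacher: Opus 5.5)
\textbf{Verdict: genuine gap.} Your outline has the paper's architecture: the family $\cE$, a Sion-based dual in which the adversary commits to $q_t$ and the learner collects the best-response reward, multiscale one-sided covers, and chaining in which extra correct rounds are charged to the best-response slack. However, the step you flag as the main obstacle is the heart of the proof, and your formulation of it cannot work.

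\textbf{The disagreement step.} You express the slack through the comparator's \emph{expected} reward, $\sum_t(r_t-q_t(h(x_t)))$. The dual value contains only its realized reward $\sum_t\ind\{h(x_t)=Y_t\}$. Passing between the two requires controlling $\sum_t(\ind\{h(x_t)=Y_t\}-q_t(h(x_t)))$ uniformly over $h\in\cH$. That is a martingale supremum over the loss class, governed by the sequential graph dimension, which can be infinite even when $d=1$. Reducing first to $\max_{e\in\cE}$ does not help either: it discards $h$, while your covers are defined relative to $h\in\cH$. Your displayed bound also has the sign reversed (it should read $\ind\{e_t=Y_t\}-q_t(e_t)$), and as a finite-class bound it only yields $\sqrt{T\log|\cE|}$.

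The paper avoids every martingale indexed by $h$. It keeps $W_h=\sum_t(\ind\{h(x_t)=Y_t\}-b_t)$ as the comparator term. Only $h$ with $W_h\ge0$ matter, because the final bound is nonnegative. For those $h$ it uses the realized identity $P_h(v)-F_h(v)=W_v-W_h\le W_v\le M_v$. Along a chain with $F_h(v_j)\le s_j$ and $M_{v_j}\le B_j$, this gives $D(v_j,v_{j-1})\le B_j+B_{j-1}+2s_j+2s_{j-1}$. A self-normalized pair bound $M_v-M_{v'}\le\lambda D(v,v')+a/\lambda$, stated with the realized $D$, then handles the tree-indexed structure. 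The $h$-dependent chains are selected on one common event, via Markov bounds on the prior mass of violating experts and pairs followed by the probabilistic method.

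Your scale sum is right for the leading term, but the excess is not already paid for. The $B_j\approx\sqrt{T(d+\tau)j}$ terms enter the distance and contribute $T^{1/4}(d+\tau)^{3/4}(1+\log(T/d))^{7/4}$. This must be checked to be $O(\sqrt{dT})$.

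\textbf{The cover step.} This is asserted by analogy rather than proved. An expert in $\cE$ updates by conditioning on the observed label. Such an update keeps $h$ and lowers the dimension only on a round where $h$ is correct and SOA errs. These rounds depend on $h$ and on the sequence, and nothing in your block-partition description shows that the experts with deficit at most $\alpha T$ receive mass $(\alpha/C)^d$.

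The paper gives $e_I$ weight $m^{-|I|}$, with normalizer $Z_m\le(eT/(dm))^d$. For each $h$ it grows an $m$-ary tree whose children update on one of the next $m$ deficit rounds of the parent's version space. At most $m$ deficits occur between consecutive updates, and fewer than $m$ after the last one (none at depth $d$), so every leaf has deficit at most $dm$. Since $\sum_{\text{leaves}}m^{-|I|}=1$, the leaves carry total mass $1/Z_m$.
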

Together with the lower bounds of Daniely et al.~\cite[Theorems~24 and~26]{daniely}, Theorem~\ref{thm:main} shows that the optimal regret is
$\Theta(\min\{T,\sqrt{dT}\})$.

\begin{remark}[Measurability]
Our regret bound holds on every labeled sequence and requires no measurability assumptions.
\end{remark}

\section{Preliminaries: SOA and finite expert covers}
\label{sec:preliminaries}

\subsection{Version spaces and SOA}
A labeled sequence is realizable by $V\subseteq\cH$ if some
$h\in V$ agrees with all its labels. If, in addition, $\LD(V)=0$,
then $V=\{h\}$. For an instance-label pair, define
\[
 V_{x,y}=\{h\in V:h(x)=y\}.
\]
For nonempty $V$, the Standard Optimal Algorithm~\cite{littlestone,daniely} predicts
\[
 \SOA(V,x)\in\argmax_{y\in\cY}\LD(V_{x,y}).
\]
The maximum exists because the dimensions lie in
$\{-1,0,\ldots,\LD(V)\}$. Fix all tie-breaking choices and a
default label for $\SOA(\varnothing,x)$. When SOA is run
conservatively, it replaces $V$ by $V_{x,y}$ after a mistake with
observed label $y$, and leaves $V$ unchanged after a correct prediction.
The following lemma shows that a mistake from a nonempty version
space reduces its dimension. We include a short proof for completeness.

\begin{lemma}[Rank drop~{\cite{littlestone,daniely}}]
\label{lem:rank-drop}
If $V\subseteq\cH$ is nonempty and $y\ne\SOA(V,x)$, then
$\LD(V_{x,y})\le\LD(V)-1$.
\end{lemma}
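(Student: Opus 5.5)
We argue by contradiction: assume $\LD(V_{x,y})\ge k$, where $k=\LD(V)$, and build a shattered tree of depth $k+1$ with root instance $x$. This contradicts the definition of $\LD(V)$ as a supremum. We first dispose of the degenerate cases. If $\LD(V)=\infty$ the inequality $\LD(V_{x,y})\le\infty-1=\infty$ holds trivially, so we may assume $k=\LD(V)<\infty$; since $V$ is nonempty, $k\ge0$. Let $\hat y=\SOA(V,x)$. Because $V_{x,y}\subseteq V$, we already have $\LD(V_{x,y})\le k$, so it suffices to rule out $\LD(V_{x,y})=k$.

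Suppose, for contradiction, that $\LD(V_{x,y})=k$. By the defining maximality of SOA, $\LD(V_{x,\hat y})\ge\LD(V_{x,y})=k$. So both $V_{x,y}$ and $V_{x,\hat y}$ shatter trees of depth $k$; call them $\tau_y$ and $\tau_{\hat y}$. In the case $k=0$ we only need each set to be nonempty: depth $0$ means a single leaf with the empty path, which is shattered by any nonempty class.

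Now form a tree of depth $k+1$. Put $x$ at the root, label its two outgoing edges $y$ and $\hat y$ (distinct by hypothesis), and hang $\tau_y$ below the $y$-edge and $\tau_{\hat y}$ below the $\hat y$-edge. Take any root-to-leaf path, and suppose its first edge is labeled $y$. The remainder of the path is a root-to-leaf path of $\tau_y$, so some $h\in V_{x,y}$ realizes its labels, and $h(x)=y$ by definition of $V_{x,y}$. Hence $h\in V$ realizes the whole path; the $\hat y$ case is symmetric. The edge-distinctness condition holds at the root and is inherited inside the subtrees. Therefore $V$ shatters a tree of depth $k+1>\LD(V)$, a contradiction.

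The only delicate point is bookkeeping with the conventions. Definition~\ref{def:littlestone} requires both subtrees to have exactly the same depth $k$. This is automatic, since each subclass shatters a depth-$k$ tree by hypothesis, and if a class shatters a deeper tree one can truncate it. We also use that the maximum in the definition of SOA is attained, which the paper notes. The result also covers the case $V_{x,y}=\varnothing$, where $\LD=-1\le k-1$ holds directly.
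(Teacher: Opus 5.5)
Your proof is correct and follows the same route as the paper's: assume $\LD(V_{x,y})=k$, use SOA maximality to get $\LD(V_{x,\hat y})=k$, then attach the two depth-$k$ shattered trees below a root labeled $x$ with edges $\hat y$ and $y$ to obtain a depth-$(k+1)$ tree shattered by $V$. The only difference is that you write out explicitly what the paper leaves implicit: monotonicity of $\LD$ under subsets, the edge cases $k=0$ and $V_{x,y}=\varnothing$, and the fact that a shattered tree of depth exactly $k$ exists.
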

\begin{proof}
Let $k=\LD(V)$ and $a=\SOA(V,x)$. If $\LD(V_{x,y})=k$ for
$y\ne a$, maximality implies $\LD(V_{x,a})=k$ as well. Attaching depth-$k$ trees shattered by $V_{x,a}$ and $V_{x,y}$
below a root labeled $x$, with outgoing edges labeled $a$ and $y$,
yields a depth-$(k+1)$ tree shattered by $V$. This contradicts the definition of Littlestone dimension. Therefore, a mistake must reduce the dimension.
\end{proof}
On a sequence realizable by $V$, the version space remains nonempty
and each SOA mistake drops its dimension. Thus conservative SOA
makes at most $\LD(V)$ mistakes~\cite{littlestone,daniely}.

\subsection[A finite family of experts]{A finite family of experts}
\label{subsec:hanneke-cover}
A deterministic expert $v$ is specified by prediction maps
$v_t:(\cX\times\cY)^{t-1}\times\cX\to\cY$.
On round $t$, it predicts $v_t(z_{<t},x_t)$.
For every $I\subseteq[T]$ with $|I|\le\min\{d,T\}$, define
\begin{equation}
 V_t^I=\{g\in\cH:g(x_r)=y_r\text{ for all }r\in I\cap[t-1]\},
 \qquad e_{I,t}=\SOA(V_t^I,x_t).
 \label{eq:soa-experts}
\end{equation}
Let $\cE=(e_I)_{I\subseteq[T],\,|I|\le\min\{d,T\}}$ be the
indexed family of these experts. Distinct update sets are counted separately, even when they define
the same prediction rule. Thus,
\[
 |\cE|=\sum_{k=0}^{\min\{d,T\}}\binom Tk.
\]
This is the expert family used by Hanneke et al.~\cite{multiclass}.

For a fixed labeled sequence and $h\in\cH$, define
\begin{align}
 F_h(v)&=\sum_{t=1}^T\ind\{h(x_t)=y_t,\ v_t\ne y_t\},\nonumber\\
 P_h(v)&=\sum_{t=1}^T\ind\{v_t=y_t,\ h(x_t)\ne y_t\}.
 \label{eq:one-sided-deficits}
\end{align}
We call $F_h(v)$ the \emph{deficit} of $v$ relative to $h$.
It counts the \emph{deficit rounds}, on which $h$ is correct and $v$
is not. The quantity $P_h(v)$ counts the expert's extra correct predictions. The difference $F_h(v)-P_h(v)$ is the expert's excess number
of mistakes relative to $h$. The sum $F_h(v)+P_h(v)$ is the number
of rounds on which their correctness indicators disagree.

The earlier expert constructions~\cite{bps,daniely} guarantee that,
for every $h\in\cH$ and instance sequence, some expert agrees with $h$
on every round. Hanneke et al.~\cite{multiclass}
showed that controlling only the deficit $F_h(v)$ suffices to obtain
agnostic regret bounds independent of the size of the label space.
We state this one-sided guarantee in the following proposition and
include a short proof for completeness.

\begin{proposition}[One-sided guarantee; cf.~{\cite[proof of Theorem~4]{multiclass}}]
\label{prop:hanneke-cover}
For every labeled sequence and every $h\in\cH$, there is an expert
$e_I\in\cE$ such that $F_h(e_I)=|I|\le d$. Consequently,
\begin{equation}
 \sum_{t=1}^T\ind\{e_{I,t}\ne y_t\}
 \le\sum_{t=1}^T\ind\{h(x_t)\ne y_t\}+d.
 \label{eq:hanneke-cover}
\end{equation}
\end{proposition}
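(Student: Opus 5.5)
We construct $I$ greedily by simulating conservative SOA on the rounds where $h$ is correct. Fix the labeled sequence and $h\in\cH$. Let $G=\{t\in[T]:h(x_t)=y_t\}$ be the rounds on which $h$ is correct.

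\textbf{Construction of $I$.} Process the rounds in order, starting from $I=\varnothing$. At round $t$, we have the set $I\cap[t-1]$ built so far, hence the version space $V_t^I$ and the prediction $e_{I,t}=\SOA(V_t^I,x_t)$. Add $t$ to $I$ exactly when $t\in G$ and $e_{I,t}\ne y_t$. Rounds are only ever appended at the current time, so earlier predictions of $e_I$ are never altered. The final $I$ therefore satisfies
\[
 I=\{t\in G: e_{I,t}\ne y_t\}.
\]
This gives $F_h(e_I)=|I|$ directly from the definition of $F_h$.

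\textbf{Bounding $|I|$.} Every $r\in I$ lies in $G$, so $h(x_r)=y_r$ for each such $r$. Hence $h\in V_t^I$ for all $t$, and each version space is nonempty. Now take $t\in I$. There SOA erred with $y_t\ne\SOA(V_t^I,x_t)$, and the next version space is $V_{t+1}^I=(V_t^I)_{x_t,y_t}$. By Lemma~\ref{lem:rank-drop}, its dimension drops by at least one. At rounds not in $I$ the version space is unchanged. Dimensions start at $\LD(\cH)=d$ and, by nonemptiness, stay $\ge 0$, so $|I|\le d$. Also $|I|\le T$ trivially, so $|I|\le\min\{d,T\}$ and $e_I\in\cE$.

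\textbf{Deriving \eqref{eq:hanneke-cover}.} Split the mistakes of $e_I$ according to whether $h$ is correct:
\[
 \sum_t\ind\{e_{I,t}\ne y_t\}=F_h(e_I)+\sum_t\ind\{h(x_t)\ne y_t,\ e_{I,t}\ne y_t\}.
\]
The second sum is at most $\sum_t\ind\{h(x_t)\ne y_t\}$, and the first term is $|I|\le d$, which gives \eqref{eq:hanneke-cover}.

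The only delicate point is the self-referential definition of $I$, since the expert's predictions depend on $I$ itself. Because $e_{I,t}$ depends only on $I\cap[t-1]$, the round-by-round construction resolves this, and it should be stated explicitly in the proof.
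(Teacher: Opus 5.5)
Your proposal is correct and follows the paper's proof: you greedily add the rounds on which $h$ is correct but SOA on the current version space errs, bound $|I|\le d$ via Lemma~\ref{lem:rank-drop}, and note that the excess mistakes are at most the deficit $F_h(e_I)=|I|$. You also observe explicitly that $e_{I,t}$ depends only on $I\cap[t-1]$, which resolves the apparent circularity in defining $I$; the paper leaves this detail implicit.
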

\begin{proof}
Fix the sequence and $h$. Starting from $V=\cH$, scan the rounds and
update $V$ exactly when $h(x_t)=y_t$ and $\SOA(V,x_t)\ne y_t$.
Let $I$ be the set of these rounds. Every update preserves $h$ in
the version space and decreases its dimension by
Lemma~\ref{lem:rank-drop}, so $|I|\le d$. The expert $e_I$ maintains exactly these version spaces, and its deficit rounds are precisely the rounds
in $I$. Finally, its excess mistakes equal
$F_h(e_I)-P_h(e_I)\le F_h(e_I)$.
\end{proof}

Hanneke et al.~\cite[Theorem~4]{multiclass} apply exponential
weights to this expert family. For $1\le d\le T$, the standard
expert-advice guarantee~\cite{cb2006} and $|\cE|\le(eT/d)^d$
yield regret at most
\[
 d+\sqrt{\frac{T}{2}\log|\cE|}
 =O\!\left(\sqrt{dT\log(eT/d)}\right).
\]
This bound is independent of $|\cY|$. Our proof removes the
factor $\sqrt{\log(eT/d)}$ by using fractional covers at multiple
scales.

\subsection{New component: one-sided fractional covers}

Alon et al.~\cite{alon} removed the extra logarithmic factor in
the binary regret bound of Ben-David et al.~\cite{bps} by introducing
\emph{fractional covers} and combining them with chaining. A fractional
cover replaces a bound on the number of experts by a lower bound on
the prior mass of experts that approximate each $h\in\cH$. Inspired
by this approach, we extend the one-sided cover of Hanneke et
al.~\cite{multiclass} to a fractional cover in which approximation is
measured by the deficit $F_h(v)$.

\begin{definition}[One-sided fractional cover]
\label{def:fractional-cover}
Fix a finite indexed family $\cE$ of experts, a scale $s\ge0$, and
$\gamma\ge1$. A prior $\nu\in\Delta(\cE)$ is a one-sided
fractional cover of $\cH$ at scale $s$ with size $\gamma$ if
\[
 \nu\{v:F_h(v)\le s\}\ge\gamma^{-1}
 \qquad\text{for every labeled sequence and every }h\in\cH.
\]
\end{definition}
Proposition~\ref{prop:hanneke-cover} provides such a cover at scale $d$
with size $|\cE|$ by taking the uniform prior. For $1\le d\le T$,
Lemma~\ref{lem:cover} constructs a prior on the same expert family at each scale
$s\in[d,T]$, with size at most $(2eT/s)^d$. This bound decreases
as the permitted deficit increases and provides the estimates
needed for chaining.

A symmetric prediction cover controls
$\sum_t\ind\{v_t\ne h(x_t)\}$. In binary classification, where predictions and outcomes share a
two-label alphabet, this is exactly the number of correctness
disagreements. The binary covers used in~\cite{bps,alon} provide this
type of guarantee. Proposition~\ref{prop:hanneke-cover} controls only $F_h(v)$,
leaving $P_h(v)$ unrestricted. In Section~\ref{sec:chaining}, we obtain
the additional control on $P_h(v)$ needed for chaining.

\section{Proof overview}
\label{sec:overview}
We describe the three steps of the proof in the nontrivial regime
$1\le d\le T$. The expert family $\cE$ is always the one defined
in~\eqref{eq:soa-experts}. The cover priors below are tools for the
analysis; they are not the mixture weights played by the learner.

\paragraph{Step 1: construction of one-sided fractional covers.}
Section~\ref{sec:cover} strengthens
Proposition~\ref{prop:hanneke-cover} from existence of one expert to
a lower bound on the mass of suitable experts. For each $s\in[d,T]$
we construct a prior $\nu_s$ on $\cE$ such that, on every sequence,
\[
 \nu_s\{v:F_h(v)\le s\}\ge(s/(2eT))^d
 \qquad\text{for every }h\in\cH.
\]
For simplicity, first assume $s=dm$ for an integer $1\le m\le T/d$. We assign expert $e_I\in\cE$ prior mass proportional to $m^{-|I|}$, with normalizing constant $Z_m\le(eT/(dm))^d$.

Let us briefly describe how we find a set of good experts for a fixed
$h$ and labeled sequence. We inductively construct a tree whose nodes at depth $k$ are tuples $I=(i_1,\ldots,i_k)$ of update times in increasing order. The root is the empty tuple, and we set $i_0=0$. If the expert $e_I$ has fewer than $m$ deficit rounds after
$i_k$, we declare it a leaf. Otherwise, we add $m$ children
$I_r=(i_1,\ldots,i_k,\tau_r)$, where $\tau_r$ is the $r$th deficit
round after $i_k$, for $1\le r\le m$. Each selected update agrees
with $h$ and occurs on an SOA mistake. Thus the version space keeps
$h$ and its dimension drops at every edge, so the tree is finite and
has depth at most $d$.

Each branch simulates SOA, updating its version space only on
selected deficit rounds. Choosing child $r$ skips updates on
the first $r-1$ deficit rounds and updates after
predicting on the $r$th. Since $r\le m$, a branch ending at
depth $k$ corresponds to an expert that has incurred at most
$km$ deficits through round $i_k$. If $k<d$, the leaf condition
allows at most $m-1$ further deficits, giving a total of at most
$km+(m-1)\le dm$. If $k=d$, its version space only contains $h$, so there are no further deficits. Hence every leaf corresponds to an expert with deficit at most $dm$.

Finally, every internal node has exactly $m$ children. Starting at
the root and choosing each child uniformly reaches leaf $I$ with
probability $m^{-|I|}$, and therefore $\sum_{I\text{ a leaf}}m^{-|I|}=1$. Distinct leaves correspond to distinct update sets, so their total
prior mass is $\frac{1}{Z_m}\sum_{I\text{ a leaf}}m^{-|I|} = 1/Z_m\ge(dm/(eT))^d$. For general $s\in[d,T]$, take
$m=\lfloor s/d\rfloor$ and use this prior as $\nu_s$. Since $s/2\le dm\le s$, the experts corresponding to leaves
have deficit at most $s$, and their total prior mass is at least
$(s/(2eT))^d$, as required.

\paragraph[Step 2: the finite-expert dual game.]{Step 2: the finite-expert dual game.}
Section~\ref{sec:minimax} restricts the learner to mixtures over
$\cE$ and derives a dual expression for the value of this game~\cite{abernethy2009,rst2015}. Since $\cE$ is finite, the simplex $\Delta(\cE)$ is compact and convex. Furthermore, the
one-round payoff is continuous and affine in each player's distribution. Sion's minimax theorem~\cite{sion}
therefore applies at each fixed history and instance. Backward
induction then gives the following dual description. At each history,
the adversary chooses an instance $x_t$ and a finitely supported
distribution $q_t$. The learner observes both and chooses weights
$w_t\in\Delta(\cE)$, after which $Y_t\sim q_t$ is drawn.
The learner's conditional correctness reward is
$\sum_{v\in\cE}w_t(v)q_t(v_t)$. Thus, a best response is
to put all weight on an expert maximizing $q_t(v_t)$. Let
\[
 b_t=\max_{v\in\cE}q_t(v_t),\qquad
 W_h=\sum_{t=1}^T\bigl(\ind\{h(x_t)=Y_t\}-b_t\bigr).
\]
We show that the value of the restricted game is
$\sup_{(\mathbf x,\mathbf q)}\EE\max_{h\in\cH} W_h$,
where $(\mathbf x,\mathbf q)$ specifies the adversary's choices at every history.
Restricting the learner to mixtures over $\cE$ can only increase the
minimax value relative to the original game. Thus, to prove our theorem,
it suffices to upper bound the value of this restricted game.

\paragraph{Step 3: bounding the dual value by chaining.} Section~\ref{sec:chaining} fixes an arbitrary dual tree and bounds
$\EE\max_{h\in\cH}W_h$ by chaining. For $v\in\cE$, let
$C_t(v)=\ind\{v_t=Y_t\}$ and define

\[
 W_v=\sum_{t=1}^T(C_t(v)-b_t),\qquad
 M_v=\sum_{t=1}^T(C_t(v)-q_t(v_t)).
\]

Since $q_t(v_t)\le b_t$, we have $W_v\le M_v$. For every
$h\in\cH$ and every realization of the labeled sequence, if
$F_h(v)\le s$, then
\[
 W_h=W_v+F_h(v)-P_h(v)\le W_v+s\le M_v+s.
\]

Combining the cover at scale $s=d$ with the single-expert
concentration bound leaves an extra logarithmic factor. To remove
it, we use chaining, choosing approximating experts at progressively
finer scales and controlling the differences between their
martingale sums. To this end, take scales $s_j=T4^{-j}$ for
$0\le j\le J=\lfloor\log_4(T/d)\rfloor$, so $d\le s_J<4d$.
For a chain $v_0,\ldots,v_J$ with $F_h(v_j)\le s_j$,
we obtain by telescoping
\[
 W_h\le M_{v_0}+\sum_{j=1}^J(M_{v_j}-M_{v_{j-1}})+s_J.
\]
It remains to show that, on a common high-probability event,
every $h\in\cH$ admits such a chain with controlled initial term
and increments. We construct this event using martingale
concentration to control, simultaneously
at all scales, the masses of experts and pairs violating their
concentration bounds under $\nu_{s_j}$ and
$\nu_{s_j}\otimes\nu_{s_{j-1}}$, respectively.

For $G_j(h)=\{v:F_h(v)\le s_j\}$, Step~1 gives
\[
 \nu_{s_j}(G_j(h))\ge(s_j/(2eT))^d.
\]
On the common event, the bounds on the violating masses ensure
that, after conditioning each prior on $G_j(h)$, the failure
probabilities for all single-expert and adjacent-pair bounds sum
to less than one. Independent draws from these conditional priors
therefore satisfy all the bounds with positive probability,
proving that such a chain exists.

The increment bounds depend on the correctness distances
$D(v,v')=\sum_t(C_t(v)-C_t(v'))^2$, which the deficit bounds alone
do not control. For $h$ with $W_h\ge0$, we use
\[
 P_h(v_j)+F_h(v_j)
 =W_{v_j}-W_h+2F_h(v_j)
 \le M_{v_j}+2s_j.
\]
The left-hand side counts correctness disagreements between
$v_j$ and $h$. Thus the bounds on $M_{v_j}$ also control these
disagreements, and consequently the distances between adjacent
experts. Substituting into the increment bounds and summing
yields a high-probability bound on $\max_{h\in\cH}W_h$.
Integrating the resulting tail bound gives
$\EE\max_{h\in\cH}W_h\le C\sqrt{dT}$, uniformly over dual trees.
Step~2 then completes the proof.

\section{One-sided fractional covers for Littlestone classes}
\label{sec:cover}
Assume $1\le d\le T$ and use the expert family from~\eqref{eq:soa-experts}.
Proposition~\ref{prop:hanneke-cover} guarantees an expert with
deficit at most $d$. We now construct one-sided fractional covers using the same expert family.

\begin{lemma}[Weighted coverage]
\label{lem:cover}
For an integer $m$ with $1\le m\le T/d$, set
\begin{equation}
 Z_m=\sum_{k=0}^d\binom Tk m^{-k},\qquad
 \mu_m(e_I)=\frac{m^{-|I|}}{Z_m}.
 \label{eq:prior}
\end{equation}
For every labeled sequence and every $h\in\cH$,
\begin{equation}
 \mu_m\{v:F_h(v)\le dm\}\ge Z_m^{-1},\qquad
 Z_m\le\left(\frac{eT}{dm}\right)^d.
 \label{eq:weighted-cover}
\end{equation}
Consequently, for every real $s\in[d,T]$ there is a prior $\nu_s$ on the same expert family $\cE$, independent of the sequence and of $h$, satisfying
\begin{equation}
 \nu_s\{v:F_h(v)\le s\}\ge\gamma_s^{-1},\qquad
 \gamma_s=(2eT/s)^d.
 \label{eq:real-cover}
\end{equation}
\end{lemma}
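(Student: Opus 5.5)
The proof follows the tree construction sketched in Step~1 of Section~\ref{sec:overview}; there are three parts. Fix a labeled sequence and $h\in\cH$.

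\textbf{The tree.} The nodes are increasing tuples $I=(i_1,\ldots,i_k)$ with $i_0=0$. Each node is built so that its version space $V^I$ (after the last update) contains $h$ and has dimension at most $d-k$. Run expert $e_I$ and list its deficit rounds after $i_k$, that is, rounds $t>i_k$ with $h(x_t)=y_t$ and $e_{I,t}\ne y_t$. If there are fewer than $m$ of them, $I$ is a leaf. Otherwise $I$ gets children $I_r=(I,\tau_r)$ for $r=1,\dots,m$, where $\tau_r$ is the $r$th such round. On rounds after $i_k$ and up to $\tau_r$, the expert $e_{I_r}$ coincides with $e_I$, because the update at $\tau_r$ only affects later rounds. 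So $e_{I_r}$ really does make an SOA mistake at $\tau_r$ on a label agreeing with $h$. By Lemma~\ref{lem:rank-drop}, $\LD(V^{I_r})\le \LD(V^I)-1$ and $h\in V^{I_r}$. Hence the depth is at most $d$, the tree is finite, and $|I|\le\min\{d,T\}$, so each node indexes an expert of $\cE$.

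\textbf{Deficit of each leaf.} By induction, $e_I$ has at most $km$ deficits through round $i_k$. Moving to child $r$ adds the $r\le m$ deficits at $\tau_1,\dots,\tau_r$, which $e_{I_r}$ shares with $e_I$. A leaf at depth $k<d$ has at most $m-1$ further deficits, so its total is at most $dm-1$. At depth $d$ the version space has dimension $0$ and contains $h$, so it equals $\{h\}$; SOA then predicts $h(x_t)$ and there are no further deficits. Thus every leaf $v$ satisfies $F_h(v)\le dm$. Since every internal node has exactly $m$ children and the tree is finite, the uniform random walk from the root gives $\sum_{\text{leaves}}m^{-|I|}=1$. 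Distinct leaves are distinct index sets, so $\mu_m\{F_h\le dm\}\ge Z_m^{-1}$.

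\textbf{Bounds on $Z_m$ and the general scale.} For $d\le T$ and $m\le T/d$, use
\[
\sum_{k\le d}\binom Tk m^{-k}\le\sum_{k\le d}\frac{T^k}{k!\,m^k}\,\frac{d^{d-k}\cdot k!}{d^{d-k}k!}
\]
together with the standard trick: multiply by $(d/(T/m))^{d}\cdot$ and use $(T/(dm))^{d-k}\ge1$, giving
\[
Z_m\le (T/(dm))^d\sum_k\binom Tk (d/T)^k\le (T/(dm))^d(1+d/T)^T\le (eT/(dm))^d.
\]
For real $s\in[d,T]$, set $m=\lfloor s/d\rfloor\ge1$, so $s/2\le dm\le s$ and $m\le T/d$, and let $\nu_s=\mu_m$. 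Monotonicity in the deficit threshold gives $\nu_s\{F_h\le s\}\ge (dm/(eT))^d\ge(s/(2eT))^d$.

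The main obstacle is the consistency claim in the first part: that the child expert's behavior before $\tau_r$ is identical to the parent's. This needs care because the version space $V_t^I$ in \eqref{eq:soa-experts} filters on $I\cap[t-1]$, so the new element $\tau_r$ only enters at rounds $t>\tau_r$. With that checked, the deficit counting in the second part goes through.
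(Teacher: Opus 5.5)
Your proposal is correct and follows the paper's proof essentially step for step. It uses the same $m$-ary tree of updates on deficit rounds, the same leaf bound $F_h(e_I)\le dm$, the same random-walk identity $\sum_{\text{leaves}}m^{-|I|}=1$, the same estimate $Z_m\le(T/(dm))^d(1+d/T)^T$ (which is the paper's substitution $a=dm/T$), and the same choice $m=\lfloor s/d\rfloor$ for general $s$. Your check that $e_{(I,\tau_r)}$ agrees with $e_I$ through round $\tau_r$, because $V_t^I$ filters on $I\cap[t-1]$, is precisely the point the paper leaves implicit. The garbled first display in your $Z_m$ estimate is harmless, since the bound you then derive is correct.
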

\begin{proof}
    Fix a labeled sequence $(x_t,y_t)_{1:T}$ and $h\in \cH$. We will construct a set of good experts with the desired mass under $\mu_m$. We inductively define a rooted tree whose nodes are tuples of update times in increasing order. The root is the empty tuple. A node $I=(i_1, \cdots, i_k)$ with $i_1 < \cdots < i_k$ has version space
    \[
    V_I=\{g\in\cH:g(x_{i_r})=y_{i_r}\text{ for }1\le r\le k\}.
    \]

    Then $V_{\emptyset}=\cH$. Set $i_0=0$. All updates selected below agree with $h$, so $h\in V_I$ at every node. Assume all nodes at depth $k$ have been defined and fix any such node $I$ of length $k$. Consider the deficit rounds after $i_k$ when $\SOA$ uses the fixed version space $V_I$. That is,
    \begin{equation}
    \tau_1<\cdots<\tau_q,
    \qquad
    \{\tau_1,\ldots,\tau_q\}
    =\{t>i_k:h(x_t)=y_t,\ \SOA(V_I,x_t)\ne y_t\}.
     \label{eq:correction-opportunities}
    \end{equation}

    If $q<m$, declare this node a leaf. Otherwise, add exactly $m$ children to this node: $(I,\tau_1),\ (I,\tau_2),\ldots,(I,\tau_m)$. Choosing child $(I,\tau_r)$ skips updates on $r-1\leq m-1$ deficit rounds and updates after predicting on the $r$th.

    At a chosen time $\tau_r$, the defining conditions in~\eqref{eq:correction-opportunities} say that $y_{\tau_r}=h(x_{\tau_r})$ and that SOA predicted a different label. The child's version space therefore contains $h$, and Lemma~\ref{lem:rank-drop} applies:
    \begin{equation}
    h\in V_{(I,\tau_r)},\qquad
    \LD(V_{(I,\tau_r)})\le\LD(V_I)-1.
    \label{eq:correction-rank}
    \end{equation}

    Each update decreases the Littlestone dimension by at least one,
    while the version space remains nonempty. Thus each branch has at most $d$
    updates. At depth $d$, the dimension is zero, so $V_I=\{h\}$
    and SOA incurs no further deficit rounds.

    We now show that each expert $e_I$ corresponding to a leaf $I$ has deficit $F_h(e_I)$ at most $dm$. Take any such leaf $I=(i_1, \cdots, i_k)$. The expert $e_I$ before its first update has version space $V_\emptyset$. It then updates on $i_1$ and continues with $V_{(i_1)}$ and so on. By the above arguments, it has deficit at most $m$ on rounds $i_{r-1} < t \leq i_r$. After its last update, if $k<d$, it incurs at most $m-1$ further deficits by the leaf condition. If $k=d$, it incurs no further deficit since $V_I=\{h\}$ and $\SOA(V_I,x)=h(x)$. Therefore,
    \begin{equation}
    F_h(e_I)\le
    \begin{cases}
    km+(m-1)=(k+1)m-1\le dm-1,&k<d,\\
    dm,&k=d.
    \end{cases}
    \label{eq:leaf-deficit}
    \end{equation}

    This shows that every leaf corresponds to a good expert. It remains to show that their total prior mass under $\mu_m$ satisfies the desired bound. Starting at the root, choose a child uniformly at each internal node until reaching a leaf. Each leaf $I$ then has probability $m^{-|I|}$ of being reached. Since the process will eventually reach a leaf, we have $\sum_{I\text{ a leaf}}m^{-|I|}=1$. Distinct leaves correspond to distinct update sets, since each set has a unique increasing ordering. Hence,
    \begin{equation}
    \begin{split}
    \mu_m\{v:F_h(v)\le dm\}
    &\ge\sum_{I\text{ a leaf}}\mu_m(e_I)\\
    &=\frac{1}{Z_m}\sum_{I\text{ a leaf}}m^{-|I|}
    =\frac{1}{Z_m}.
    \end{split}
    \label{eq:leaf-mass}
    \end{equation} 
    This proves the coverage part of~\eqref{eq:weighted-cover}. To bound $Z_m$, set $a=dm/T\in(0,1]$. Since $a^d\le a^k$ for
    $0\le k\le d$, we have
    \[
    a^d Z_m
    \le \sum_{k=0}^d\binom Tk(a/m)^k
    \le (1+d/T)^T
    \le e^d.
    \]
    Dividing by $a^d$ gives $Z_m\le(eT/(dm))^d$,
    completing~\eqref{eq:weighted-cover}. Finally, fix $s\in[d,T]$ and let $m=\lfloor s/d\rfloor$.
    Then $1\le m\le T/d$ and $s/2\le dm\le s$. Set $\nu_s=\mu_m$.
    The preceding bounds yield
    \[
    \nu_s\{v:F_h(v)\le s\}
    \ge Z_m^{-1}
    \ge \left(\frac{dm}{eT}\right)^d
    \ge \left(\frac{s}{2eT}\right)^d
    =\gamma_s^{-1}.
    \]
    This prior is independent of the labeled sequence and of $h$, proving
    \eqref{eq:real-cover}.
\end{proof}

\section[A finite-expert minimax reduction]{A finite-expert minimax reduction}
\label{sec:minimax}
We adapt the stochastic minimax duality argument of
\citet[Theorem~1]{abernethy2009} and
\citet[Theorem~1 and Section~5.1]{rst2015} to mixtures over
a finite family of experts. We include the proof to verify
the minimax and attainment claims for arbitrary label spaces.

Fix any nonempty finite indexed family $\cE$ of deterministic experts. At a fixed history and current instance,
let $v_t=v_t(z_{<t},x_t)$ denote the prediction of expert $v$.

A learner chooses weights $w\in\Delta(\cE)$, inducing the following
label distribution with support size at most $|\cE|$:
\[
 p_t^w(y)=\sum_{v\in\cE}w(v)\ind\{v_t=y\},\qquad y\in\cY.
\]

We express regret using correctness rewards. For a complete labeled sequence $z_{1:T}$, define
\[
 \opt_{\cH}(z_{1:T})=\max_{h\in\cH}\sum_{t=1}^T\ind\{h(x_t)=y_t\}.
\]
For the learner $\mathbb A$ induced by a mixture rule $\boldsymbol w$,
the regret in~\eqref{eq:pathwise-regret} is then
\begin{equation}
 \Reg_T(\mathbb A;z_{1:T})
 =\opt_{\cH}(z_{1:T})-\sum_{t=1}^T p_t^{\boldsymbol w}(y_t).
 \label{eq:regret-reward}
\end{equation}
Let $\cV_T(\cE,\cH)$ be the minimax value of this regret when the learner
must use such mixtures. Explicitly,
\begin{equation}
 \cV_T(\cE,\cH)
 =\inf_{\boldsymbol w}\sup_{z_{1:T}\in(\cX\times\cY)^T}
 \left[\opt_{\cH}(z_{1:T})-\sum_{t=1}^T p_t^{\boldsymbol w}(y_t)\right],
 \label{eq:restricted-value}
\end{equation}
where $\boldsymbol w=(w_t)_{t=1}^T$ ranges over deterministic mixture
rules $w_t:(\cX\times\cY)^{t-1}\times\cX\to\Delta(\cE)$, and
$p_t^{\boldsymbol w}=p_t^{w_t(z_{<t},x_t)}$ on the given sequence.

To state the duality lemma, we first define the adversary's stochastic trees.
A stochastic tree $(\mathbf{x},\mathbf{q})$ specifies deterministic maps
$\mathbf{x}_t:\cY^{t-1}\to\cX$ and
$\mathbf{q}_t:\cY^{t-1}\to\Deltafin(\cY)$ for $t\in[T]$.
At a node with label history $Y_{<t}$, let
\[
 x_t=\mathbf{x}_t(Y_{<t}),\qquad q_t=\mathbf{q}_t(Y_{<t}).
\]
The next label $Y_t$ is drawn from $q_t$, and the children of the node
correspond to the labels in $\supp(q_t)$. Along the tree, let
$z_{<t}=((x_s,Y_s))_{s<t}$ and $v_t=v_t(z_{<t},x_t)$.
Every tree has finitely many reachable nodes up to the horizon $T$.
Expectations below are taken under its induced finite probability law.

On round $t$, define $b_t=\max_{v\in\cE}q_t(v_t)$ as the largest
expected correctness reward achievable by a mixture of the experts' current predictions against the adversary's distribution $q_t$.

\begin{lemma}[Finite-expert duality]
\label{lem:minimax}
For every nonempty finite indexed family $\cE$ of deterministic experts,
\begin{equation}
 \cV_T(\cE,\cH)
 =\sup_{(\mathbf{x},\mathbf{q})}\EE\max_{h\in\cH}
   \sum_{t=1}^T\left(\ind\{h(x_t)=Y_t\}-b_t\right),
 \label{eq:dual}
\end{equation}
where the supremum is over the stochastic trees defined above. Moreover, there exists a deterministic mixture rule
whose regret is at most $\cV_T(\cE,\cH)$ on every labeled sequence.
\end{lemma}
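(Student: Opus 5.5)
We will prove the duality by backward induction on rounds, defining a value function on histories and applying Sion's minimax theorem one round at a time. For a history $z_{<t}$ (with $t\in[T+1]$), define
\[
 \Phi_{T+1}(z_{1:T})=\opt_{\cH}(z_{1:T}),\qquad
 \Phi_t(z_{<t})=\sup_{x\in\cX}\ \inf_{w\in\Delta(\cE)}\ \sup_{y\in\cY}\Bigl[\Phi_{t+1}(z_{<t},(x,y))-p^w_t(y)\Bigr].
\]
All $\Phi_t$ take values in $[0,T]$, since the bracket is at least $\Phi_{t+1}-1$ and rewards are in $[0,1]$. The first step is to check that $\Phi_1$ equals $\cV_T(\cE,\cH)$ and is attained by a deterministic rule. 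For the upper bound, at each history and instance $x$ we choose $w_t(z_{<t},x)$ as a minimizer of the inner problem. Attainment holds because $w\mapsto\sup_y[\Phi_{t+1}-p^w_t(y)]$ is a supremum of affine, hence continuous, functions of $w$. It is therefore lower semicontinuous on the compact simplex and attains its infimum. Induction then gives regret at most $\Phi_1$ on every sequence, which also proves the ``moreover'' claim. The lower bound $\cV_T\ge\Phi_1$ is the standard adversary argument: against any rule, choose $x$ and $y$ $\varepsilon/T$-optimally at each round.

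The second step is the one-round swap. At fixed $z_{<t}$ and $x$, we replace $\sup_y$ by $\sup_{q\in\Deltafin(\cY)}$, which changes nothing because the objective is linear in $q$. The payoff
\[
 (w,q)\mapsto\EE_{Y\sim q}\Phi_{t+1}(z_{<t},(x,Y))-\sum_v w(v)q(v_t)
\]
is affine in each argument and continuous in $w$ on the compact convex $\Delta(\cE)$. The set $\Deltafin(\cY)$ is convex and needs no topology. Sion's theorem therefore gives
\[
 \inf_w\sup_q=\sup_q\inf_w=\sup_q\Bigl[\EE_q\Phi_{t+1}-\max_v q(v_t)\Bigr]=\sup_q\EE_q\bigl[\Phi_{t+1}-b_t\bigr].
\]
The infimum over $w$ is attained at a point mass on a maximizer of $q(v_t)$, and this is where $b_t$ comes from.

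The third step unrolls this recursion. We get $\Phi_1=\sup_{(\mathbf x,\mathbf q)}\EE\bigl[\opt_{\cH}(z_{1:T})-\sum_t b_t\bigr]$, and $\opt_{\cH}-\sum_t b_t$ is exactly $\max_h\sum_t(\ind\{h(x_t)=Y_t\}-b_t)$, since $b_t$ does not depend on $h$. Nested suprema of conditional expectations become a supremum over trees. Because each $q_t$ is finitely supported, the tree has finitely many nodes, so near-optimal choices at each node can be assembled into one tree within $\varepsilon$ without measurability issues. Conversely, any tree gives at most $\Phi_1$ by induction. The maps $\mathbf x_t,\mathbf q_t$ depend only on $Y_{<t}$, but this is harmless because the $x_s$ are themselves determined by $Y_{<s}$ along the tree.

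The main obstacle is the Sion step for arbitrary $\cY$. We must make sure that compactness is needed only on the learner's side and that semicontinuity holds in $w$: for fixed $q$ the payoff is affine and continuous in $w$. We also need that $\Phi_{t+1}$ is bounded, so the expectations are finite sums. The remaining steps are bookkeeping with $\varepsilon$-approximations.
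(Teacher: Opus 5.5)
Your proposal is correct and follows essentially the paper's proof: the same backward-induction value function, a one-round Sion swap in which the best response to $q$ yields $b_t$, assembly of $\varepsilon/T$-optimal choices into a finite tree, and compactness of $\Delta(\cE)$ to obtain a minimizing mixture. The main variation is that you prove $\cV_T(\cE,\cH)\ge\Phi_1$ with an adaptive adversary built against each deterministic rule, whereas the paper averages over a stochastic tree using $\EE[p_t^{\boldsymbol w}(Y_t)\mid Y_{<t}]\le b_t$; both arguments work.

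There are two small slips, neither of which affects the argument. First, your parenthetical only gives $\Phi_t\ge-(T-t+1)$, not $\Phi_t\ge0$. Boundedness is all you use, and nonnegativity does hold, since the adversary can keep playing the labels of a best prefix hypothesis. Second, Sion's theorem as usually stated needs upper semicontinuity in $q$. The paper obtains this from the $\ell_1$ topology; alternatively, the Kneser--Fan minimax theorem needs no topology on that side.
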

\begin{proof}
    We first prove a one-round identity and then apply backward induction
    to obtain~\eqref{eq:dual}.
    Fix a labeled history $z_{<t}$ and a current instance $x_t$, so that
    the predictions $v_t$ are fixed. For any bounded function $G:\cY\to\R$, with $G_y=G(y)$, we claim
    the following one-round identity:
    \begin{equation}
    \inf_{w\in\Delta(\cE)}\sup_{y\in\cY}
    \bigl(G_y-p_t^w(y)\bigr)
    =\sup_{q\in\Delta_{\fin}(\cY)}
    \left(\EE_qG_Y-\max_{v\in\cE}q(v_t)\right).
    \label{eq:one-round}
    \end{equation}
    First, observe that
    \[\inf_{w\in\Delta(\cE)}\sup_{y\in\cY}
    \bigl(G_y-p_t^w(y)\bigr) = \inf_{w\in\Delta(\cE)}\sup_{q\in\Delta_{\fin}(\cY)}\EE_q
    \bigl[G_Y-p_t^w(Y)\bigr],
    \]
    since an average cannot exceed the supremum over labels, while point
    masses can approach that supremum arbitrarily closely. By Sion's minimax
    theorem~\cite[p.~174]{sion},
    \[
    \inf_{w\in\Delta(\cE)}\sup_{q\in\Delta_{\fin}(\cY)}\EE_q
    \bigl[G_Y-p_t^w(Y)\bigr] = \sup_{q\in\Delta_{\fin}(\cY)}\inf_{w\in\Delta(\cE)}\EE_q
    \bigl[G_Y-p_t^w(Y)\bigr].
    \]
    Indeed, $\Delta(\cE)$ is compact and convex, and
    $\Delta_{\fin}(\cY)$ is a convex subset of $\ell_1(\cY)$.
    Equip the latter set with the inherited $\ell_1$ topology. Since $G$
    is bounded, the payoff is continuous and affine in each argument,
    so the theorem applies. For fixed $q\in\Delta_{\fin}(\cY)$,
    linearity of expectation gives
    \[
    \inf_{w\in\Delta(\cE)}\EE_q
    \bigl[G_Y-p_t^w(Y)\bigr] = \EE_q G_Y - \sup_{w\in\Delta(\cE)}\EE_q \sum_{v\in \cE}w(v)\ind\{v_t=Y\},
    \]

    and $\EE_q \sum_{v\in\cE}w(v)\ind\{v_t=Y\}
    =\sum_{v\in\cE}w(v)q(v_t)$. Since this is a convex combination of
    the finitely many values $q(v_t)$, its supremum over $w\in\Delta(\cE)$
    equals $\max_{v\in\cE}q(v_t)$, attained by putting all weight on a
    maximizing expert. Therefore,
    \[
    \inf_{w\in\Delta(\cE)}\EE_q[G_Y-p_t^w(Y)]
    =\EE_qG_Y-\max_{v\in\cE}q(v_t).
    \]
    Substituting this equality into the preceding minimax identity
    proves~\eqref{eq:one-round}.

    We now apply this identity recursively. Set $U_T(z_{1:T})=\opt_{\cH}(z_{1:T})$
    and, for $t=T,\ldots,1$, define
    \begin{equation}
    U_{t-1}(z_{<t})
    =\sup_{x_t\in\cX}\inf_{w\in\Delta(\cE)}\sup_{y_t\in\cY}
    \left[U_t(z_{<t},(x_t,y_t))-p_t^w(y_t)\right],
    \label{eq:backward-value}
    \end{equation}
    where $p_t^w$ uses the expert predictions at $(z_{<t},x_t)$.
    The order of the operations follows the order of play. Since
    $0\le \opt_{\cH}\le T$ and $0\le p_t^w(y)\le1$, backward induction gives
    $-(T-t)\le U_t\le T$, so every continuation value is bounded.

    Applying~\eqref{eq:one-round} with
    $G_y=U_t(z_{<t},(x_t,y))$ gives the recursion
    \begin{equation}
    U_{t-1}(z_{<t})
    =\sup_{x_t\in\cX}\sup_{q_t\in\Delta_{\fin}(\cY)}
    \left[\EE_{q_t}U_t(z_{<t},(x_t,Y_t))
    -\max_{v\in\cE}q_t(v_t)\right].
    \label{eq:dual-recursion}
    \end{equation}
    For any fixed stochastic tree $(\mathbf{x},\mathbf{q})$, its choices
    at each reached node therefore satisfy
    \[
    \EE[U_t(z_{1:t})\mid Y_{<t}]-b_t\le U_{t-1}(z_{<t}).
    \]
    Taking expectations and summing over $t$ gives
    \[
    \EE\left[\opt_{\cH}(z_{1:T})-\sum_{t=1}^T b_t\right]\le U_0.
    \]

    Conversely, fix $\varepsilon>0$. At every reached node, choose
    $x_t,q_t$ within $\varepsilon/T$ of the supremum
    in~\eqref{eq:dual-recursion}. Each $q_t$ has finite support, so these
    choices define a finite tree of depth $T$. At each of its nodes,
    \[
    \EE[U_t(z_{1:t})\mid Y_{<t}]-b_t
    \ge U_{t-1}(z_{<t})-\varepsilon/T.
    \]
    Taking expectations and summing over the $T$ rounds, we obtain an
    expected payoff of at least $U_0-T(\varepsilon/T)=U_0-\varepsilon$.
    Letting $\varepsilon$ tend to zero, we obtain
    \[
    U_0=\sup_{(\mathbf{x},\mathbf{q})}
    \EE\left[\opt_{\cH}(z_{1:T})-\sum_{t=1}^T b_t\right].
    \]

    We next show that $\cV_T(\cE,\cH)\ge U_0$.
    Fix any deterministic mixture rule $\boldsymbol w$ and any stochastic tree.
    Since $w_t(z_{<t},x_t)$ and the expert predictions are determined before $Y_t$
    is drawn,
    \[
    \EE[p_t^{\boldsymbol w}(Y_t)\mid Y_{<t}]
    =\sum_{v\in\cE}w_t(z_{<t},x_t)(v)q_t(v_t)\le b_t.
    \]
    Hence, taking expectations and summing over the rounds,
    \begin{equation*}
    \EE\Reg_T
    =\EE\!\left[\opt_{\cH}(z_{1:T})-\sum_{t=1}^T p_t^{\boldsymbol w}(Y_t)\right] \geq\EE\!\left[\opt_{\cH}(z_{1:T})-\sum_{t=1}^T b_t\right].
    \end{equation*}
    Therefore,
    \[
    \begin{aligned}
    \cV_T(\cE,\cH)
    &=\inf_{\boldsymbol w}\sup_{z_{1:T}\in(\cX\times\cY)^T}
    \left[\opt_{\cH}(z_{1:T})-\sum_{t=1}^T p_t^{\boldsymbol w}(y_t)\right]\\
    &\geq\inf_{\boldsymbol w}\sup_{(\mathbf{x},\mathbf{q})}
    \EE\!\left[\opt_{\cH}(z_{1:T})-\sum_{t=1}^T p_t^{\boldsymbol w}(Y_t)\right]\\
    &\geq\sup_{(\mathbf{x},\mathbf{q})}
    \EE\!\left[\opt_{\cH}(z_{1:T})-\sum_{t=1}^T b_t\right] = U_0.
    \end{aligned}
    \]
    
    For the reverse inequality, fix a history $z_{<t}$ and an instance $x_t$. The inner supremum
    in~\eqref{eq:backward-value} is continuous in $w$: changing $w$ to
    $w'$ changes every term by at most $\|w-w'\|_1$, since
    $|p_t^w(y)-p_t^{w'}(y)|\le\|w-w'\|_1$ uniformly over $y$.
    Compactness of $\Delta(\cE)$ therefore ensures that a minimizing mixture exists. Fix one such mixture $w_t^*(z_{<t},x_t)$ at every history and instance,
    and let $\boldsymbol w^*=(w_t^*)_{t=1}^T$. For every labeled sequence, we have
    \[
    \begin{aligned}
    U_t(z_{1:t})-p_t^{\boldsymbol w^*}(y_t)
    &{}\le \sup_{y\in\cY}
    \left[U_t(z_{<t},(x_t,y))-p_t^{\boldsymbol w^*}(y)\right]\\
    &{}=\inf_{w\in\Delta(\cE)}\sup_{y\in\cY}
    \left[U_t(z_{<t},(x_t,y))-p_t^w(y)\right]\\
    &\le U_{t-1}(z_{<t}).
    \end{aligned}
    \]
    The first inequality takes the supremum over labels, the equality
    uses the minimizing choice of $w_t^*$, and the last inequality
    takes the supremum over instances in~\eqref{eq:backward-value}.
    Summing over $t$ and using $U_T=\opt_{\cH}$ gives
    \[
    \Reg_T
    =\opt_{\cH}(z_{1:T})-\sum_{t=1}^T p_t^{\boldsymbol w^*}(y_t)\le U_0.
    \]
    Thus $\cV_T(\cE,\cH)\le U_0$, with a deterministic mixture
    rule achieving this bound on every sequence.

    Together with $\cV_T(\cE,\cH)\ge U_0$, this proves~\eqref{eq:dual}. The rule $\boldsymbol w^*$ constructed above attains this value.
\end{proof}

\section{Bounding the dual game by localized chaining}
\label{sec:chaining}
The last step is to bound the dual value using the one-sided fractional
covers. We first select suitable chains on a common event, then use
the benchmark $b_t$ to control correctness disagreements and sum the
centered increments.

\subsection{Concentration and simultaneous chain selection}
\label{subsec:selection}
We return to the expert family $\cE$ of Lemma~\ref{lem:cover} and assume
$1\le d\le T$. In this subsection, we establish concentration estimates
and a simultaneous chain-selection result that will be used to bound
the dual game value in Lemma~\ref{lem:minimax}, uniformly over
stochastic trees.
Fix one such tree $(\mathbf{x},\mathbf{q})$. Throughout this section,
$\PP$ and $\EE$ refer to its label draws, and
$\cF_t=\sigma(Y_1,\ldots,Y_t)$, with $\cF_0$ trivial. The current
instance $x_t$, distribution $q_t$, and each expert's prediction $v_t$
are $\cF_{t-1}$-measurable, and the conditional law of $Y_t$ is $q_t$.

For $v,v'\in\cE$, define
\begin{align}
 C_t(v)&=\ind\{v_t=Y_t\},&
 M_v&=\sum_{t=1}^T\bigl(C_t(v)-q_t(v_t)\bigr),\nonumber\\
 W_v&=\sum_{t=1}^T\bigl(C_t(v)-b_t\bigr),&
 D(v,v')&=\sum_{t=1}^T\bigl(C_t(v)-C_t(v')\bigr)^2.
 \label{eq:processes}
\end{align}
Thus $M_v$ is the centered cumulative correctness of $v$, while
$D(v,v')$ counts rounds on which exactly one of the two experts is
correct. Since
$\EE[C_t(v)\mid\cF_{t-1}]=q_t(v_t)$, the partial sums defining $M_v$
form a martingale. Moreover, $q_t(v_t)\le b_t$ gives
\begin{equation}
 W_v=M_v-\sum_{t=1}^T\bigl(b_t-q_t(v_t)\bigr)\le M_v.
 \label{eq:domination}
\end{equation}
For $h\in\cH$, let $C_t(h)=\ind\{h(x_t)=Y_t\}$ and
$W_h=\sum_{t=1}^T(C_t(h)-b_t)$.
To bound $\EE\max_{h\in\cH}W_h$, we will use chains of experts
with small deficits and controlled centered increments.
The experts in these chains may depend on the realized sequence,
so concentration bounds for fixed experts cannot be applied
directly to them.

Following the fractional-cover method of~\cite{alon,bdr}, we first
control, with high probability, the prior masses of experts and
pairs that violate the concentration bounds. For each $h\in\cH$,
we then condition each prior on experts with small deficit relative
to $h$. Using the cover guarantees, we show that a chain drawn from
these conditional priors has positive probability of satisfying
all the concentration bounds. Such a chain therefore exists for
every $h\in\cH$ on the same event, without a union bound over $\cH$.

\begin{lemma}[Concentration for fixed experts]
\label{lem:concentration}
For every fixed $v\in\cE$ and every $a>0$,
\begin{equation}
 \PP\{M_v>\sqrt{Ta/2}\}\le e^{-a}.
 \label{eq:hoeffding}
\end{equation}
For every fixed ordered pair $v,v'\in\cE$, every $a>0$, and every
$0<\lambda\le1/2$,
\begin{equation}
 \PP\{M_v-M_{v'}>\lambda D(v,v')+a/\lambda\}\le e^{-a}.
 \label{eq:empirical}
\end{equation}
\end{lemma}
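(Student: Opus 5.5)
For the first bound, \eqref{eq:hoeffding}, I will apply the Azuma--Hoeffding inequality to the martingale $M_v$. Fix $v\in\cE$ and set $\xi_t=C_t(v)-q_t(v_t)$. Since $v_t$ and $q_t$ are $\cF_{t-1}$-measurable and $Y_t$ has conditional law $q_t$, we have $\EE[\xi_t\mid\cF_{t-1}]=0$. Moreover, $\xi_t$ lies in the interval $[-q_t(v_t),1-q_t(v_t)]$. This interval has length one and its endpoints are $\cF_{t-1}$-measurable. Hoeffding's lemma, applied conditionally, gives $\EE[e^{\theta\xi_t}\mid\cF_{t-1}]\le e^{\theta^2/8}$ for every real $\theta$. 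Iterating the tower property, i.e.\ conditioning on $\cF_{T-1},\ldots,\cF_0$ in turn, yields $\EE e^{\theta M_v}\le e^{\theta^2T/8}$. Markov's inequality with the optimal choice $\theta=4u/T$ then gives $\PP\{M_v>u\}\le e^{-2u^2/T}$. Setting $u=\sqrt{Ta/2}$ gives exactly $e^{-a}$. All expectations here are finite sums, since the tree has finitely many reachable nodes.

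For the second bound, \eqref{eq:empirical}, I will use a self-normalized exponential supermartingale. Let $\delta_t=C_t(v)-C_t(v')\in\{-1,0,1\}$ and $\mu_t=\EE[\delta_t\mid\cF_{t-1}]=q_t(v_t)-q_t(v'_t)$. Then $M_v-M_{v'}=\sum_t(\delta_t-\mu_t)$, and $D(v,v')=\sum_t\delta_t^2$. The key one-step claim is that for $0<\lambda\le1/2$,
\[
 \EE\bigl[\exp\bigl(\lambda(\delta_t-\mu_t)-\lambda^2\delta_t^2\bigr)\bigm|\cF_{t-1}\bigr]\le1 .
\]
Granting this, the process $S_k=\exp\bigl(\sum_{t\le k}[\lambda(\delta_t-\mu_t)-\lambda^2\delta_t^2]\bigr)$ is a nonnegative supermartingale with $\EE S_T\le1$. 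The event $\{M_v-M_{v'}>\lambda D+a/\lambda\}$ is the same as $\{\lambda(M_v-M_{v'})-\lambda^2D>a\}$. Markov's inequality therefore bounds its probability by $e^{-a}$.

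The main obstacle is the one-step inequality. I plan to use the elementary bound $e^{x-x^2}\le1+x$ for $x\ge-1/2$, which can be checked by calculus. Writing $x=\lambda\delta_t$, we have $|x|\le1/2$, so $e^{\lambda\delta_t-\lambda^2\delta_t^2}\le1+\lambda\delta_t$. Taking conditional expectations gives at most $1+\lambda\mu_t\le e^{\lambda\mu_t}$. Multiplying by $e^{-\lambda\mu_t}$ yields the claim. This is the step where the restriction $\lambda\le1/2$ enters.

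I would verify $e^{x-x^2}\le1+x$ on $[-1/2,1/2]$ directly. Let $\phi(x)=\log(1+x)-x+x^2$. Then $\phi(0)=0$ and
\[
 \phi'(x)=\frac{1}{1+x}-1+2x=\frac{x(1+2x)}{1+x}.
\]
This derivative is nonpositive on $[-1/2,0]$ and nonnegative on $[0,1/2]$, so $\phi\ge0$ on the whole interval, as required.
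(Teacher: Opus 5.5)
Your proposal is correct and follows the paper's proof essentially step for step. For the first bound you use conditional Hoeffding, iterated expectations, and Markov with $\theta=4u/T$. For the second you use the nonnegative supermartingale $\exp\{\lambda\sum_{s\le t}(X_s-m_s)-\lambda^2\sum_{s\le t}X_s^2\}$, built from $e^{u-u^2}\le1+u$ on $|u|\le1/2$ and checked via the same derivative $u(1+2u)/(1+u)$.
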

Lemma~\ref{lem:concentration} follows from the conditional Hoeffding
bound and an exponential-supermartingale argument. Its proof is given
in Appendix~\ref{app:concentration}.

\paragraph{Mass of experts violating the concentration bounds.}
We now consider fractional covers at different scales. On a single
high-probability event, we will control the prior mass of experts
violating the concentration bounds at each scale, and the product-prior
mass of violating expert pairs at adjacent scales.

Set scales $s_j=T4^{-j}$ for $0 \leq j \leq J=\left\lfloor\log_4(T/d)\right\rfloor$. These scales decrease from $s_0=T$ to $d\le s_J<4d$.
For each $j$, take the prior $\mu_j=\nu_{s_j}$ given in
Lemma~\ref{lem:cover}, and define
\[
 \gamma_j=(2eT/s_j)^d=(2e4^j)^d,\qquad
 G_j(h)=\{v\in\cE:F_h(v)\le s_j\}.
\]
By~\eqref{eq:real-cover},
\begin{equation}
 \mu_j(G_j(h))\ge\gamma_j^{-1}
 \qquad\text{for every labeled sequence and every }h\in\cH.
 \label{eq:scale-coverage}
\end{equation}

Fix a tail parameter $\tau\ge0$ and let $\eta_j=2^{-j-4}$. Define
\begin{align}
 a_j&=\tau+\log\gamma_j+2\log(1/\eta_j),
 & B_j&=\sqrt{Ta_j/2},&&0\le j\le J,\nonumber\\
 a'_j&=\tau+\log(\gamma_j\gamma_{j-1})+2\log(1/\eta_j),
 &&&&1\le j\le J.
 \label{eq:chain-thresholds}
\end{align}
We also fix numbers $\lambda_j\in(0,1/2]$ for $1\le j\le J$,
whose values will be specified later. We will bound the following
random variables under the tree's label law:
\begin{align*}
 A_j&=\mu_j\{v\in\cE:M_v>B_j\},&&0\le j\le J,\\
 Q_j&=(\mu_j\otimes\mu_{j-1})
       \{(v,v')\in\cE^2:
       M_v-M_{v'}>\lambda_jD(v,v')+a'_j/\lambda_j\},&&1\le j\le J.
\end{align*}

\begin{lemma}[Simultaneous control of violating masses]
\label{lem:good-masses}
For every fixed $\tau\ge0$ and every fixed choice of
$\lambda_1,\ldots,\lambda_J\in(0,1/2]$, the event
\begin{equation}
 \Omega_\tau
 =\bigcap_{j=0}^J\left\{A_j\le\frac{\eta_j}{\gamma_j}\right\}
 \cap\bigcap_{j=1}^J
 \left\{Q_j\le\frac{\eta_j}{\gamma_j\gamma_{j-1}}\right\}
 \label{eq:good-masses}
\end{equation}
satisfies
\begin{equation}
 \PP(\Omega_\tau)\ge1-\frac{e^{-\tau}}4.
 \label{eq:good-probability}
\end{equation}
\end{lemma}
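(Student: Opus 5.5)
The plan is Markov's inequality applied to each violating mass, followed by a union bound over the finitely many scales. The key observation is that $A_j$ and $Q_j$ are prior-weighted averages of indicator events about \emph{fixed} experts (or pairs). The priors $\mu_j$ do not depend on the labeled sequence, so their expectations can be computed by Fubini (here just a finite sum, since $\cE$ is finite and the tree has finitely many reachable nodes). This avoids any need for uniformity over experts.

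First, by Fubini,
\[
\EE A_j=\sum_{v\in\cE}\mu_j(v)\,\PP\{M_v>B_j\}.
\]
Since $B_j=\sqrt{Ta_j/2}$, inequality~\eqref{eq:hoeffding} of Lemma~\ref{lem:concentration} with $a=a_j$ gives $\PP\{M_v>B_j\}\le e^{-a_j}$. Therefore
\[
\EE A_j\le e^{-a_j}=e^{-\tau}\gamma_j^{-1}\eta_j^{2}.
\]
Markov's inequality then yields
\[
\PP\{A_j>\eta_j/\gamma_j\}\le \frac{\gamma_j}{\eta_j}\,\EE A_j\le e^{-\tau}\eta_j .
\]
The pairs are handled identically. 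Write
\[
\EE Q_j=\sum_{(v,v')}\mu_j(v)\mu_{j-1}(v')\,\PP\{M_v-M_{v'}>\lambda_jD(v,v')+a'_j/\lambda_j\}.
\]
Apply~\eqref{eq:empirical} with $a=a'_j$ and $\lambda=\lambda_j\in(0,1/2]$; these are fixed in advance, so the lemma applies to each fixed ordered pair. This gives $\EE Q_j\le e^{-a'_j}=e^{-\tau}(\gamma_j\gamma_{j-1})^{-1}\eta_j^2$. Markov's inequality then gives $\PP\{Q_j>\eta_j/(\gamma_j\gamma_{j-1})\}\le e^{-\tau}\eta_j$.

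Finally, take a union bound over the complements:
\[
\PP(\Omega_\tau^c)\le e^{-\tau}\Bigl(\sum_{j=0}^J\eta_j+\sum_{j=1}^J\eta_j\Bigr)\le 2e^{-\tau}\sum_{j\ge0}2^{-j-4}=2e^{-\tau}\cdot\tfrac18=\tfrac{e^{-\tau}}4 .
\]
This is exactly~\eqref{eq:good-probability}. The factor $2\log(1/\eta_j)$ in the thresholds is what makes this work: one factor $\eta_j$ is absorbed by the Markov threshold, and the other makes the sum over scales converge.

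The only step needing care is the interchange of expectation and the prior sum. Here it is immediate, because $\cE$ is finite, the tree's label law is finitely supported, and each event $\{M_v>B_j\}$ is a measurable function of $Y_{1:T}$. So there is no genuine obstacle. The point to emphasize is that the thresholds $a_j,a'_j,\lambda_j$ must be deterministic, so that Lemma~\ref{lem:concentration} applies to each fixed expert or pair before averaging.
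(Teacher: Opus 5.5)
Your proposal is correct and follows essentially the same route as the paper. You compute $\EE A_j$ and $\EE Q_j$ by averaging the fixed-expert and fixed-pair tail bounds of Lemma~\ref{lem:concentration} against the sequence-independent priors, apply Markov's inequality at the thresholds $\eta_j/\gamma_j$ and $\eta_j/(\gamma_j\gamma_{j-1})$ to get failure probabilities at most $e^{-\tau}\eta_j$, and finish with a union bound using $\sum_{j=0}^J\eta_j+\sum_{j=1}^J\eta_j\le 1/4$.
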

\begin{proof}
    By definition, $A_j=\sum_{v\in\cE}\mu_j(v)\ind\{M_v>B_j\}$. Using Lemma~\ref{lem:concentration}, we obtain
    \begin{align*}
    \EE A_j
    &=\sum_{v\in\cE}\mu_j(v)\PP\{M_v>B_j\}\\
    &\le\sum_{v\in\cE}\mu_j(v)e^{-a_j}
    =e^{-a_j}
    =e^{-\tau}\eta_j^2/\gamma_j.
\end{align*}
Similarly, $\EE Q_j\le e^{-\tau}\eta_j^2/(\gamma_j\gamma_{j-1})$.
By Markov's inequality,
\[
 \PP\{A_j>\eta_j/\gamma_j\}
 \le\frac{\EE A_j}{\eta_j/\gamma_j}
 \le e^{-\tau}\eta_j,
\]
and

\[
 \PP\{Q_j>\eta_j/(\gamma_j\gamma_{j-1})\}
 \le e^{-\tau}\eta_j.
\]

A union bound completes the proof, since $\sum_{j=0}^J\eta_j + \sum_{j=1}^J\eta_j < \frac{1}{4}$.
\end{proof}

\paragraph{Good experts at all scales.} We now show that on $\Omega_\tau$, for each $h$ there exists a chain of ``good'' experts at all scales.

\begin{lemma}[Simultaneous chain selection]
\label{lem:selection}
On the event $\Omega_\tau$ from Lemma~\ref{lem:good-masses}, every
$h\in\cH$ admits a tuple
$v_0,\ldots,v_J\in\cE$ satisfying
\begin{align}
 F_h(v_j)&\le s_j,\qquad M_{v_j}\le B_j,
 &&0\le j\le J,\label{eq:selected-single}\\
 M_{v_j}-M_{v_{j-1}}
 &\le\lambda_jD(v_j,v_{j-1})+a'_j/\lambda_j,
 &&1\le j\le J.\label{eq:selected-pair}
\end{align}
\end{lemma}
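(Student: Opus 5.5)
The proof is a probabilistic-method argument carried out on the fixed realization in $\Omega_\tau$. Fix an outcome in $\Omega_\tau$ and an $h\in\cH$. For each $0\le j\le J$ define the conditional prior
\[
 \pi_j(v)=\frac{\mu_j(v)\ind\{v\in G_j(h)\}}{\mu_j(G_j(h))}.
\]
This is well defined because \eqref{eq:scale-coverage} gives $\mu_j(G_j(h))\ge\gamma_j^{-1}>0$. Draw $V_0,\ldots,V_J$ independently with $V_j\sim\pi_j$; this randomization is auxiliary and separate from the label law. Every draw satisfies $F_h(V_j)\le s_j$ automatically. It therefore suffices to show that, with positive probability under $\pi_0\otimes\cdots\otimes\pi_J$, all the bounds $M_{V_j}\le B_j$ and all the pair bounds \eqref{eq:selected-pair} hold at once. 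The existence of a realization then gives the desired tuple.

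The key step is to bound the failure probabilities by the violating masses. For a single index $j$,
\[
 \pi_j\{v:M_v>B_j\}\le\frac{\mu_j\{v:M_v>B_j\}}{\mu_j(G_j(h))}\le\gamma_jA_j\le\eta_j,
\]
using the definition of $\Omega_\tau$ in \eqref{eq:good-masses}. For an adjacent pair, independence gives $\pi_j\otimes\pi_{j-1}\le(\mu_j\otimes\mu_{j-1})/(\mu_j(G_j(h))\mu_{j-1}(G_{j-1}(h)))$ pointwise. The probability that $(V_j,V_{j-1})$ violates \eqref{eq:selected-pair} is therefore at most $\gamma_j\gamma_{j-1}Q_j\le\eta_j$. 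A union bound gives total failure probability at most $\sum_{j=0}^J\eta_j+\sum_{j=1}^J\eta_j<1/4<1$, the same sum used in Lemma~\ref{lem:good-masses}. Hence some tuple satisfies every bound.

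The main point to verify is that the events defining $A_j$ and $Q_j$ match exactly the conditions in \eqref{eq:selected-single} and \eqref{eq:selected-pair}, with the same $B_j$, $a'_j$ and $\lambda_j$. This holds because Lemma~\ref{lem:good-masses} was stated for the same fixed $\lambda_j$. One must also check that $\mu_j$ is independent of $h$ and of the sequence, while $G_j(h)$ depends on the realized sequence. This causes no difficulty because the whole argument is pathwise on a fixed outcome: the masses $A_j,Q_j$ are numbers there, and no union bound over $\cH$ is needed. Everything else is a routine computation, since $\cE$ is finite and all the measures are finite sums.
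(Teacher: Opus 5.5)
Your proposal is correct and matches the paper's proof essentially step for step. You condition each $\mu_j$ on $G_j(h)$ and draw the $V_j$ independently. On $\Omega_\tau$ you bound the single and adjacent-pair failure probabilities by $\gamma_jA_j\le\eta_j$ and $\gamma_j\gamma_{j-1}Q_j\le\eta_j$, and a union bound with total below $1/4$ finishes the argument.
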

\begin{proof}
    Fix a realized sequence $z_{1:T}\in\Omega_\tau$ and
    $h\in\cH$. Define the conditional priors
    \[
    \mu_j^h(v)=
    \frac{\mu_j(v)\ind\{v\in G_j(h)\}}{\mu_j(G_j(h))},
    \qquad v\in\cE.
    \]
    These are well-defined since
    $\mu_j(G_j(h))\ge\gamma_j^{-1}>0$.
    We use the probabilistic method: it suffices to show that the tuples
    satisfying the lemma's conditions have positive probability under
    $\bigotimes_{j=0}^J\mu_j^h$. Let $\PP_{\mathrm{aux}}$ denote this
    auxiliary probability. Let $V_0,\ldots,V_J$ be independent draws
    with $V_j\sim\mu_j^h$. For every $0\le j\le J$,
    \begin{align*}
    \PP_{\mathrm{aux}}\{M_{V_j}>B_j\}
    &=\frac{\mu_j(\{v:M_v>B_j\}\cap G_j(h))}{\mu_j(G_j(h))}\\
    &\le\gamma_j\mu_j\{v:M_v>B_j\}
    =\gamma_j A_j\le\eta_j,
    \end{align*}
    where the last inequality follows from $z_{1:T}\in\Omega_\tau$.

    Similarly, independence of $V_j$ and $V_{j-1}$ gives, for every
    $1\le j\le J$,
    \begin{align*}
    &\PP_{\mathrm{aux}}\!\left\{
    M_{V_j}-M_{V_{j-1}}
    >\lambda_jD(V_j,V_{j-1})+a'_j/\lambda_j\right\}\\
    &\qquad\le
    \frac{Q_j}{\mu_j(G_j(h))\mu_{j-1}(G_{j-1}(h))}\\
    &\qquad\le\gamma_j\gamma_{j-1}Q_j\le\eta_j.
    \end{align*}
    The first inequality bounds the mass of violating pairs in
    $G_j(h)\times G_{j-1}(h)$ by their total product-prior mass $Q_j$.
    The remaining inequalities use~\eqref{eq:scale-coverage} and
    $z_{1:T}\in\Omega_\tau$.

    Since $V_j\in G_j(h)$ almost surely, the deficit bounds
    $F_h(V_j)\le s_j$ hold automatically. By a union bound, the
    auxiliary probability that any of the single-expert or adjacent-pair
    concentration bounds fails is at most
    \[
    \sum_{j=0}^J\eta_j+\sum_{j=1}^J\eta_j<\frac14.
    \]
    Thus the tuples satisfying all the lemma's conditions have positive
    auxiliary probability, so at least one such tuple exists. The sequence
    $z_{1:T}\in\Omega_\tau$ and $h\in\cH$ were arbitrary,
    proving the simultaneous statement.
\end{proof}

\subsection{Controlling disagreements and completing the bound}
\label{subsec:dual-value}
We now use the selected chains to bound $\EE\max_{h\in\cH}W_h$
uniformly over stochastic trees. Fix $\tau\ge0$ and set, for
$1\le j\le J$,
\[
 K_j=B_j+B_{j-1}+2s_j+2s_{j-1},\qquad
 \lambda_j=\min\left\{\frac12,\sqrt{\frac{a'_j}{K_j}}\right\}.
\]
These parameters depend only on $T,d,j,\tau$ and are fixed. Let $\Omega_\tau$ be the event in
Lemma~\ref{lem:good-masses} for this choice of parameters.
The remaining proof consists of the following steps.

\paragraph{Controlling correctness disagreements.}
On $\Omega_\tau$, fix $h\in\cH$ with $W_h\ge0$ and take the chain
provided by Lemma~\ref{lem:selection}. We show that
\[
 \sum_{t=1}^T\bigl(C_t(v_j)-C_t(h)\bigr)^2\le B_j+2s_j,
 \qquad 0\le j\le J.
\]
Indeed, the definitions in~\eqref{eq:one-sided-deficits} yield
\[
 \sum_{t=1}^T\bigl(C_t(v_j)-C_t(h)\bigr)^2=P_h(v_j)+F_h(v_j).
\]
Moreover,
\[
 P_h(v_j)-F_h(v_j)
 =\sum_{t=1}^T\bigl(C_t(v_j)-C_t(h)\bigr)
 =W_{v_j}-W_h
 \le M_{v_j}\le B_j,
\]
where we used $W_h\ge0$, $W_{v_j}\le M_{v_j}$, and
Lemma~\ref{lem:selection}. Consequently,
\[
 \sum_{t=1}^T\bigl(C_t(v_j)-C_t(h)\bigr)^2
 =\bigl(P_h(v_j)-F_h(v_j)\bigr)+2F_h(v_j)
 \le B_j+2s_j.
\]

Since the correctness indicators take values in $\{0,1\}$, whenever
$C_t(v_j)\ne C_t(v_{j-1})$, at least one of them differs from $C_t(h)$.
Thus, for $1\le j\le J$,
\begin{align*}
 D(v_j,v_{j-1})
 &\le \sum_{t=1}^T\bigl(C_t(v_j)-C_t(h)\bigr)^2
     +\sum_{t=1}^T\bigl(C_t(v_{j-1})-C_t(h)\bigr)^2\\
 &\le B_j+B_{j-1}+2s_j+2s_{j-1}
 =K_j.
\end{align*}

\paragraph{Telescoping and summing.}
By~\eqref{eq:selected-pair} and the distance bound just proved,
\[
 M_{v_j}-M_{v_{j-1}}
 \le\lambda_jK_j+\frac{a'_j}{\lambda_j}.
\]
If $a'_j\le K_j/4$, then $\lambda_j=\sqrt{a'_j/K_j}$ and the
right-hand side equals $2\sqrt{K_ja'_j}$. Otherwise,
$\lambda_j=1/2$ and it is $K_j/2+2a'_j\le4a'_j$. Thus, in either
case,
\[
 M_{v_j}-M_{v_{j-1}}\le2\sqrt{K_ja'_j}+4a'_j.
\]
Moreover,
\[
 W_h-W_{v_J}=F_h(v_J)-P_h(v_J)\le F_h(v_J)\le s_J.
\]
Using $W_{v_J}\le M_{v_J}$ and telescoping along the chain gives
\begin{align}
 W_h
 &\le M_{v_J}+s_J\nonumber\\
 &=M_{v_0}+\sum_{j=1}^J(M_{v_j}-M_{v_{j-1}})+s_J\nonumber\\
 &\le B_0+s_J+
 \sum_{j=1}^J\left(2\sqrt{K_ja'_j}+4a'_j\right).
 \label{eq:telescoped}
\end{align}
We derived this bound for $W_h\ge0$. Since the right-hand side is
nonnegative, it also bounds $W_h$ when $W_h<0$. Therefore it bounds
$\max_{h\in\cH}W_h$ on $\Omega_\tau$.

We now estimate the sum in~\eqref{eq:telescoped}. The calculation in
Appendix~\ref{app:chain-sum} shows that its right-hand side is at most
$C(1+\tau)\sqrt{dT}$ for a universal constant $C$. Consequently,
\begin{equation}
 \max_{h\in\cH}W_h\le C(1+\tau)\sqrt{dT}
 \qquad\text{on }\Omega_\tau.
 \label{eq:dual-pathwise-bound}
\end{equation}

\paragraph{From the tail bound to regret.}
By~\eqref{eq:dual-pathwise-bound}, the event
$\{\max_{h\in\cH}W_h>C(1+\tau)\sqrt{dT}\}$ is contained in
$\Omega_\tau^c$. Lemma~\ref{lem:good-masses} therefore gives
\begin{equation}
 \PP\left\{\max_{h\in\cH}W_h>C(1+\tau)\sqrt{dT}\right\}
 \le\frac{e^{-\tau}}4,\qquad\tau\ge0.
 \label{eq:dual-tail}
\end{equation}
Let $U=\max_{h\in\cH}W_h$ and $R=C\sqrt{dT}$.
Since $U$ can be negative, we integrate the tail of its positive part
$U_+=\max\{U,0\}$. Splitting the integral at $R$ gives
\begin{align*}
 \EE U
 &\le\EE U_+
 =\int_0^\infty\PP\{U>u\}\,du\\
 &\le R+\int_R^\infty\PP\{U>u\}\,du\\
 &=R+R\int_0^\infty\PP\{U>R(1+\tau)\}\,d\tau\\
 &\le R+\frac R4\int_0^\infty e^{-\tau}\,d\tau
 =\frac54C\sqrt{dT}.
\end{align*}
Here the second inequality bounds the probability by one on $[0,R]$,
the next line substitutes $u=R(1+\tau)$, and the last line
uses~\eqref{eq:dual-tail}.

The bound holds uniformly over stochastic trees. Taking the supremum over trees and applying
Lemma~\ref{lem:minimax}, we conclude that
\[
 \cV_T(\cE,\cH)
 =\sup_{(\mathbf{x},\mathbf{q})}
 \EE\max_{h\in\cH}W_h
 \le\frac54C\sqrt{dT}.
\]
The same lemma provides a deterministic mixture rule whose regret is at most this value on every labeled sequence. Taking the supremum over labeled sequences yields the same bound
on $\Reg(\mathbb A,T)$.
This proves Theorem~\ref{thm:main} when $1\le d\le T$.

If $d=0$, $\cH$ consists of a single function. Predicting this function gives zero regret on every
sequence. If $d>T$, any strategy has regret at most $T$, since
its cumulative loss is at most $T$ and the minimum cumulative loss over $\cH$ is
nonnegative. These cases establish the stated
$O(\min\{T,\sqrt{dT}\})$ bound for all finite $d$.

\section*{AI disclosure}
The main result of this paper was obtained using OpenAI's GPT-6
Astra model. In a separate, unpublished project, we had introduced
the notion of one-sided covers, inspired by \citet{multiclass}.
While developing that project, we studied \citet{multiclass} and
\citet{alon} extensively with the model. We then asked it to
develop a fractional version of our one-sided cover, following
\citet{alon}, and use it to remove the extra logarithmic factor
from the regret upper bound. The model produced the proof that
forms the basis of this paper. We subsequently checked,
summarized, and rewrote the proof. We also used AI to assist
with writing and editing the manuscript. The authors take
full responsibility for the paper's content and correctness.

\clearpage
\bibliographystyle{optimalalpha}
\bibliography{references}

\appendix
\section{Expected sampled regret}
\label{app:sampling}

For sampled predictions $\widehat Y_t$, define the sampled regret by
\begin{equation}
 \widehat{\Reg}_T
 =\sum_{t=1}^T\ind\{\widehat Y_t\ne y_t\}
 -\min_{h\in\cH}\sum_{t=1}^T\ind\{h(x_t)\ne y_t\}.
 \label{eq:sampled-regret}
\end{equation}
The adversary may use $p_t$ and earlier sampled predictions to
choose $y_t$, but cannot observe the current draw $\widehat Y_t$.

\begin{lemma}
\label{lem:sampling}
Let $\mathbb A$ be an online learning algorithm interacting with an
adaptive adversary, possibly randomized. Sample each prediction
$\widehat Y_t$ from $p_t$ using fresh randomness after $y_t$ is chosen.
Assume that all losses and the minimum cumulative loss over $\cH$ are measurable.
If $\mathcal G_t$ denotes the information available before this draw, then
\begin{equation}
 \EE[\ind\{\widehat Y_t\ne y_t\}\mid\mathcal G_t]
 =1-p_t(y_t)\quad\text{almost surely}.
 \label{eq:conditional-sampling}
\end{equation}
Then
\begin{equation}
 \EE\widehat{\Reg}_T=\EE\Reg_T\le\Reg(\mathbb A,T).
 \label{eq:sampling-regret}
\end{equation}
\end{lemma}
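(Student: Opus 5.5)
The argument has two parts: first the conditional identity \eqref{eq:conditional-sampling}, then the expected-regret identity \eqref{eq:sampling-regret} obtained by taking expectations.

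\textbf{Step 1: the conditional identity.} Let $\mathcal G_t$ be the $\sigma$-algebra generated by everything the interaction has produced before the fresh draw of $\widehat Y_t$. This includes the history, the adversary's internal randomness, the earlier sampled predictions, $x_t$, $p_t$, and $y_t$. By construction, $\widehat Y_t$ is drawn from $p_t$ with randomness independent of $\mathcal G_t$, while $p_t$ and $y_t$ are $\mathcal G_t$-measurable. Hence, almost surely,
\[
\PP\{\widehat Y_t=y_t\mid\mathcal G_t\}=\sum_{y}p_t(y)\ind\{y=y_t\}=p_t(y_t).
\]
Because $p_t$ is finitely supported, the sum is a finite one and no measurability subtleties arise. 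Taking complements gives \eqref{eq:conditional-sampling}.

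\textbf{Step 2: the equality $\EE\widehat{\Reg}_T=\EE\Reg_T$.} Write $L^\star=\min_{h\in\cH}\sum_t\ind\{h(x_t)\ne y_t\}$, which is assumed measurable. It appears identically in $\widehat{\Reg}_T$ and in $\Reg_T$. Both regrets are bounded by $T$ in absolute value, so all expectations exist. Apply the tower property to each round's loss together with \eqref{eq:conditional-sampling}:
\[
\EE\,\ind\{\widehat Y_t\ne y_t\}=\EE\bigl[1-p_t(y_t)\bigr].
\]
Summing over $t$ and subtracting $\EE L^\star$ yields $\EE\widehat{\Reg}_T=\EE\Reg_T$. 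Here $\Reg_T$ is evaluated on the random realized labeled sequence $z_{1:T}$, and $p_t=p_t(z_{<t},x_t)$ because the learner's maps ignore earlier samples.

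\textbf{Step 3: the inequality $\EE\Reg_T\le\Reg(\mathbb A,T)$.} The realized sequence $z_{1:T}$ is, for every outcome of the randomness, some element of $(\cX\times\cY)^T$. The adversary's adaptivity, including any dependence on past samples, only affects which sequence is realized. Since the learner's distributions depend on the sequence alone, the identity \eqref{eq:pathwise-regret} holds pointwise. Therefore
\[
\Reg_T(\mathbb A;z_{1:T})\le\sup_{z'}\Reg_T(\mathbb A;z')=\Reg(\mathbb A,T)
\]
pointwise, and taking expectations gives the inequality.

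The only delicate point is Step 1: one must make precise that the draw uses fresh randomness independent of $\mathcal G_t$, and that $y_t$ is chosen before it. I would formalize this by realizing the $t$-th draw from an independent uniform variable $U_t$ through a measurable selection over the finite support of $p_t$.
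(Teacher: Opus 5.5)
Your proposal is correct and follows the paper's proof. The tower property combined with the conditional identity gives $\EE\widehat{\Reg}_T=\EE\Reg_T$, and the pointwise bound $\Reg_T(\mathbb A;z_{1:T})\le\Reg(\mathbb A,T)$ on the realized sequence gives the inequality; your Step~1 additionally derives the conditional identity from the independence of the fresh draw, which the paper takes as given.
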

\begin{proof}
Let $L_T^*=\min_{h\in\cH}\sum_{t=1}^T\ind\{h(x_t)\ne y_t\}$.
The tower property gives
\begin{align*}
 \EE\widehat{\Reg}_T
 &=\EE\!\left[\sum_{t=1}^T
   \EE[\ind\{\widehat Y_t\ne y_t\}\mid\mathcal G_t]-L_T^*\right]\\
 &=\EE\!\left[\sum_{t=1}^T(1-p_t(y_t))-L_T^*\right]
 =\EE\Reg_T\le\Reg(\mathbb A,T),
\end{align*}
where the inequality follows from the definition of $\Reg(\mathbb A,T)$.
\end{proof}

Taking the supremum over adaptive adversaries whose interactions
satisfy the lemma's assumptions gives
\[
 \sup_{\text{adaptive adversaries}}\EE\widehat{\Reg}_T
 =\Reg(\mathbb A,T).
\]
The upper bound follows from the lemma. For the reverse inequality, every fixed
labeled sequence can be played by an adversary, and on that sequence
$\EE\widehat{\Reg}_T=\Reg_T(\mathbb A;z_{1:T})$.
Thus Theorem~\ref{thm:main} also bounds worst-case expected sampled regret.

\section{Proof of the concentration lemma}
\label{app:concentration}

\begin{proof}[Proof of Lemma~\ref{lem:concentration}]
Fix $v\in\cE$. Conditional on $\cF_{t-1}$, the variable $C_t(v)$
is Bernoulli with mean $q_t(v_t)$. The conditional Hoeffding bound
therefore gives, for every fixed $\theta>0$,
\[
 \EE\!\left[
 e^{\theta(C_t(v)-q_t(v_t))}\mid\cF_{t-1}\right]
 \le e^{\theta^2/8}.
\]
Iterating conditional expectations over the $T$ rounds yields
$\EE e^{\theta M_v}\le e^{T\theta^2/8}$. Hence, by Markov's inequality,
for every $u>0$,
\[
 \PP\{M_v>u\}
 \le\exp\{-\theta u+T\theta^2/8\}.
\]
Taking $\theta=4u/T$ gives $\PP\{M_v>u\}\le e^{-2u^2/T}$.
Setting $u=\sqrt{Ta/2}$ proves~\eqref{eq:hoeffding}.

For~\eqref{eq:empirical}, fix $v,v'\in\cE$ and
$0<\lambda\le1/2$, and let
\[
 X_t=C_t(v)-C_t(v'),\qquad
 m_t=\EE[X_t\mid\cF_{t-1}]=q_t(v_t)-q_t(v'_t).
\]
We use the elementary inequality
\begin{equation}
 e^{u-u^2}\le1+u,\qquad |u|\le1/2.
 \label{eq:scalar-exponential}
\end{equation}
Indeed, $f(u)=\log(1+u)-u+u^2$ satisfies $f(0)=0$ and
$f'(u)=u(1+2u)/(1+u)$, so its minimum on this interval is zero.
Since $|X_t|\le1$, applying~\eqref{eq:scalar-exponential} with
$u=\lambda X_t$ gives
\begin{align*}
 &\EE\!\left[
 e^{\lambda(X_t-m_t)-\lambda^2X_t^2}\mid\cF_{t-1}\right]\\
 &\qquad=e^{-\lambda m_t}
 \EE\!\left[e^{\lambda X_t-\lambda^2X_t^2}
 \mid\cF_{t-1}\right]\\
 &\qquad\le e^{-\lambda m_t}(1+\lambda m_t)\le1.
\end{align*}
Consequently,
\[
 Z_t=\exp\!\left\{
 \lambda\sum_{s=1}^t(X_s-m_s)-\lambda^2\sum_{s=1}^tX_s^2
 \right\},\qquad Z_0=1,
\]
is a nonnegative supermartingale. In particular,
\[
 \EE Z_T
 =\EE\exp\!\left\{\lambda(M_v-M_{v'})-\lambda^2D(v,v')\right\}
 \le1.
\]
Another application of Markov's inequality gives
\[
 \PP\{M_v-M_{v'}>\lambda D(v,v')+a/\lambda\}
 =\PP\{Z_T>e^a\}\le e^{-a},
\]
which proves~\eqref{eq:empirical}. The realized distance $D(v,v')$
appears inside the exponential throughout the argument, while
$\lambda$ remains fixed.
\end{proof}

\section{Estimating the chain sum}
\label{app:chain-sum}
Assume $1\le d\le T$ and $\tau\ge0$, and retain the notation of
Subsection~\ref{subsec:dual-value}. Define
\[
 S_\tau=B_0+s_J+
 \sum_{j=1}^J\left(2\sqrt{K_ja'_j}+4a'_j\right).
\]
We show that $S_\tau\le C(1+\tau)\sqrt{dT}$ for a universal constant
$C$.

Substituting $\gamma_j=(2e4^j)^d$ and
$\eta_j=2^{-j-4}$ into~\eqref{eq:chain-thresholds} gives
\begin{align*}
 a_j
 &=\tau+d[1+(2j+1)\log2]+(2j+8)\log2
 \le8(d+\tau)(j+1),\\
 a'_j
 &=\tau+d[2+4j\log2]+(2j+8)\log2
 \le8(d+\tau)(j+1),
\end{align*}
where the second line applies to $j\ge1$. Both bounds use $d\ge1$ and
$\log2<3/4$. Consequently,
\[
 B_j\le2\sqrt{T(d+\tau)(j+1)}.
\]
Since $s_j=s_{j-1}/4$, for $j\ge1$ we obtain
\begin{align*}
 K_j
 &=B_j+B_{j-1}+2s_j+2s_{j-1}\\
 &\le4\sqrt{T(d+\tau)(j+1)}+\frac52T4^{-(j-1)}.
\end{align*}
Multiplying this bound by $a'_j\le8(d+\tau)(j+1)$ and using
$\sqrt{a+b}\le\sqrt a+\sqrt b$ for $a,b\ge0$ yields
\begin{align*}
 2\sqrt{K_ja'_j}
 &\le8\sqrt2\,T^{1/4}(d+\tau)^{3/4}(j+1)^{3/4}\\
 &\quad+4\sqrt5\,\sqrt{T(d+\tau)}\,
 2^{-(j-1)}\sqrt{j+1}.
\end{align*}
The second term decreases geometrically with the scale. The first
comes from the individual bounds $B_j$ used to control correctness
disagreements and has the smaller power $T^{1/4}$.

The sums appearing in~\eqref{eq:telescoped} satisfy
\begin{align*}
 \sum_{j=1}^J2^{-(j-1)}\sqrt{j+1}
 &\le\sum_{j=1}^{\infty}2^{-(j-1)}(j+1)=6,\\
 \sum_{j=1}^J(j+1)^{3/4}&\le(J+1)^{7/4},\\
 \sum_{j=1}^J(j+1)&=\frac{J(J+3)}2\le(J+1)^2.
\end{align*}
Together with $B_0\le2\sqrt{T(d+\tau)}$ and $s_J<4d$, these bounds
yield
\begin{align}
 S_\tau
 &\le56\Bigl[\sqrt{T(d+\tau)}
 +T^{1/4}(d+\tau)^{3/4}(J+1)^{7/4}\nonumber\\
 &\hspace{22mm}+(d+\tau)(J+1)^2+d\Bigr]
.
 \label{eq:intermediate-rate}
\end{align}

It remains to check that the powers of $J+1$ do not introduce an
extra logarithmic factor. Let $r=T/d\ge1$. Then
$J+1\le1+\log r$ and $d+\tau\le d(1+\tau)$, so
\begin{align*}
 \frac{T^{1/4}(d+\tau)^{3/4}(J+1)^{7/4}}{\sqrt{dT}}
 &\le(1+\tau)^{3/4}r^{-1/4}(1+\log r)^{7/4},\\
 \frac{(d+\tau)(J+1)^2}{\sqrt{dT}}
 &\le(1+\tau)r^{-1/2}(1+\log r)^2.
\end{align*}
Both factors involving $r$ are uniformly bounded for $r\ge1$:
they are continuous and tend to zero as $r\to\infty$.
The other two terms satisfy
$\sqrt{T(d+\tau)}\le\sqrt{dT}\sqrt{1+\tau}$ and
$d\le\sqrt{dT}$. Since both $\sqrt{1+\tau}$ and
$(1+\tau)^{3/4}$ are at most $1+\tau$, it follows
from~\eqref{eq:intermediate-rate} that, for a universal constant $C$,
\begin{equation}
 S_\tau\le C(1+\tau)\sqrt{dT}.
 \label{eq:chain-sum}
\end{equation}

\end{document}